\documentclass[10pt]{article}

\usepackage[utf8]{inputenc}
\usepackage[T1]{fontenc}

\usepackage{epsf}
\usepackage{amsmath}

\allowdisplaybreaks

\usepackage[showframe=false]{geometry}
\usepackage{changepage}

\usepackage{epsfig}
\usepackage{amssymb}

\usepackage{amsthm}
\usepackage{setspace}
\usepackage{cite}
\usepackage{mcite}

\usepackage{algorithmic}  
\usepackage{algorithm}

\usepackage{shadow}
\usepackage{fancybox}
\usepackage{fancyhdr}

\usepackage{color}
\usepackage[usenames,dvipsnames,svgnames,table]{xcolor}
\newcommand{\bl}[1]{\textcolor{blue}{#1}}
\newcommand{\red}[1]{\textcolor{red}{#1}}

\definecolor{mypurple}{rgb}{.4,.0,.5}
\newcommand{\prp}[1]{\textcolor{mypurple}{#1}}

\usepackage[hyphens]{url}

\usepackage[colorlinks=true,
            linkcolor=black,
            urlcolor=blue,
            citecolor=purple]{hyperref}

\usepackage{breakurl}

\def\w{{\bf w}}

\def\y{{\bf y}}

\def\x{{\bf x}}

\def\x{{\mathbf x}}

\def\w{{\bf w}}

\def\x{{\bf x}}
\def\y{{\bf y}}
\def\z{{\bf z}}
\def\q{{\bf q}}

\def\b{{\bf b}}

\def\d{{\bf d}}
\def\f{{\bf f}}

\def\tr{\mbox{Tr}}

\def\tr{{\rm tr}\,}

\def\cS{{\mathcal S}}

\def\be{\begin{equation}}
\def\ee{\end{equation}}
\def\ba{\left[\begin{array}}
\def\ea{\end{array}\right]}

\def\w{{\bf w}}

\def\x{{\bf x}}
\def\y{{\bf y}}
\def\z{{\bf z}}
\def\q{{\bf q}}

\def\b{{\bf b}}

\def\d{{\bf d}}
\def\f{{\bf f}}

\def\1{{\bf 1}}

\def\g{{\bf g}}
\def\0{{\bf 0}}

\def\erf{\mbox{erf}}
\def\erfc{\mbox{erfc}}







\def\mR{{\mathbb R}}

\def\mE{{\mathbb E}}

\def\mP{{\mathbb P}}

\def\lp{\left (}
\def\rp{\right )}

\sloppy



\def\w{{\bf w}}

\def\y{{\bf y}}

\def\x{{\bf x}}

\def\x{{\mathbf x}}

\def\w{{\bf w}}

\def\x{{\bf x}}
\def\y{{\bf y}}
\def\z{{\bf z}}
\def\q{{\bf q}}

\def\b{{\bf b}}

\def\d{{\bf d}}
\def\f{{\bf f}}

\def\tr{\mbox{Tr}}

\def\tr{{\rm tr}\,}

\def\be{\begin{equation}}
\def\ee{\end{equation}}
\def\ba{\left[\begin{array}}
\def\ea{\end{array}\right]}

\def\w{{\bf w}}

\def\x{{\bf x}}
\def\y{{\bf y}}
\def\z{{\bf z}}
\def\q{{\bf q}}

\def\b{{\bf b}}

\def\d{{\bf d}}
\def\f{{\bf f}}

\def\({\left (}
\def\){\right )}

\def\1{{\bf 1}}

\def\q{{\bf q}}

\def\g{{\bf g}}
\def\0{{\bf 0}}

\usepackage{xcolor}
\usepackage{color}

\definecolor{darkgreen}{rgb}{0, 0.4,0}
\newcommand{\dgr}[1]{\textcolor{darkgreen}{#1}}

\definecolor{purplebrown}{rgb}{0.5,0.1,0.6}

\definecolor{ultclupcol}{rgb}{0.1,0.5,0.5}

\definecolor{mytrycolor}{rgb}{0.5,0.7,0.2}


\definecolor{ultclupcola}{rgb}{.5,0,.5}

\definecolor{shadebrown}{rgb}{0.1,0.1,0.9}
\definecolor{lightblue}{rgb}{0.2,0,1}


\usepackage{fancybox}
\usepackage{graphicx}
\usepackage{epstopdf}
\usepackage{epsfig}
\usepackage{wrapfig}
\usepackage{subfigure}

\usepackage{xcolor}
\usepackage{tcolorbox}
\tcbuselibrary{skins}

%
%


\newtcbox{\xmybox}{on line,
arc=7pt,
before upper={\rule[-3pt]{0pt}{10pt}},boxrule=0pt,
boxsep=0pt,left=6pt,right=6pt,top=0pt,bottom=0pt,enhanced, coltext=blue, colback=white!10!yellow}

\newtcbox{\xmyboxa}{on line,
arc=7pt,
before upper={\rule[-3pt]{0pt}{10pt}},boxrule=0pt,
boxsep=0pt,left=6pt,right=6pt,top=0pt,bottom=0pt,enhanced, colback=white!10!yellow}

\newtcbox{\xmyboxb}{on line,
arc=7pt,
before upper={\rule[-3pt]{0pt}{10pt}},boxrule=1pt,colframe=darkgreen!100!blue,
boxsep=0pt,left=6pt,right=6pt,top=0pt,bottom=0pt,enhanced, colback=white!10!yellow}

\newtcbox{\xmyboxc}{on line,
arc=7pt,
before upper={\rule[-3pt]{0pt}{10pt}},boxrule=.7pt,colframe=blue!100!blue,
boxsep=0pt,left=6pt,right=6pt,top=0pt,bottom=0pt,enhanced, coltext=blue, colback=white!10!yellow}

\newtcbox{\xmytboxa}{on line,
arc=7pt,
before upper={\rule[-3pt]{0pt}{10pt}},boxrule=.0pt,colframe=pink!50!yellow,
boxsep=0pt,left=6pt,right=6pt,top=0pt,bottom=0pt,enhanced, coltext=white, colback=blue!40!red}

\newtcbox{\xmytboxb}{on line,
arc=7pt,
before upper={\rule[-3pt]{0pt}{10pt}},boxrule=.0pt,colframe=pink!50!yellow,
boxsep=0pt,left=6pt,right=6pt,top=0pt,bottom=0pt,enhanced, coltext=white, colback=white!40!green}


%
%
%
%

\setcounter{secnumdepth}{5}
\setcounter{tocdepth}{5}

\makeatletter
\newcommand\subsubsubsection{\@startsection{paragraph}{4}{\z@}{-2.5ex\@plus -1ex \@minus -.25ex}{1.25ex \@plus .25ex}{\normalfont\normalsize\bfseries}}
\newcommand\subsubsubsubsection{\@startsection{subparagraph}{5}{\z@}{-2.5ex\@plus -1ex \@minus -.25ex}{1.25ex \@plus .25ex}{\normalfont\normalsize\bfseries}}
\makeatother


\newtheorem{theorem}{Theorem}

\newtheorem{lemma}{Lemma}

\setlength{\oddsidemargin}{0in} \setlength{\evensidemargin}{0in}
\setlength{\textwidth}{6.5in} 
\setlength{\textheight}{9in} 
\setlength{\topmargin}{-0.25in}

\begin{document}

\begin{singlespace}

\title {Fixed width treelike neural networks capacity analysis -- generic activations 
}
\author{
\textsc{Mihailo Stojnic
\footnote{e-mail: {\tt flatoyer@gmail.com}} }}
\date{}
\maketitle

\centerline{{\bf Abstract}} \vspace*{0.1in}

We consider the capacity of \emph{treelike committee machines} (TCM) neural networks. Relying on Random Duality Theory (RDT), \cite{Stojnictcmspnncaprdt23} recently introduced a generic framework for their capacity analysis. An upgrade based on the so-called \emph{partially lifted} RDT (pl RDT) was then presented in \cite{Stojnictcmspnncapliftedrdt23}. Both lines of work focused on the networks with the most typical, \emph{sign}, activations. Here, on the other hand, we focus on networks with other, more general, types of activations and show that the frameworks of \cite{Stojnictcmspnncaprdt23,Stojnictcmspnncapliftedrdt23} are sufficiently powerful to enable handling of such scenarios as well. In addition to the standard \emph{linear} activations, we uncover that particularly convenient results can be obtained for two very commonly used activations, namely, the \emph{quadratic} and \emph{rectified linear unit (ReLU)} ones. In more concrete terms, for each of these activations, we obtain both the RDT and pl RDT based memory capacities upper bound characterization for \emph{any} given (even) number of the hidden layer neurons, $d$. In the process, we also uncover the following two, rather remarkable, facts: 1) contrary to the common wisdom, both sets of results show that the bounding capacity decreases for large $d$ (the width of the hidden layer) while converging to a constant value; and 2) the maximum bounding capacity is achieved for the networks with precisely \textbf{\emph{two}} hidden layer neurons! Moreover, the large $d$ converging values are observed to be in excellent agrement with the statistical physics replica theory based predictions.

\vspace*{0.25in} \noindent {\bf Index Terms: TCM neural networks; Capacity; Lifted random duality theory; Different activations}.

\end{singlespace}

\section{Introduction}
\label{sec:intro}

Demand for efficient handling and interpretation of large data sets, has grown rather rapidly over the last 15-20 years. Machine learning (ML) clearly distinguished itself as a particularly helpful set of concepts capable of providing the needed technical and scientific resources to meet such a high demand. Consequently, a fast development of various ML branches ensued. Neural networks (NN), as one of such branches, quickly became one of the focuses of strong research interests and among the fastest growing research fields. Many great results and quite a few remarkable breakthroughs that relate to both theoretical and practical NN aspects have been obtained. Moreover, many of the well known results achieved in prior decades -- that for a long time served as academic prototypes -- have been revisited and brought to practical usability. Among the most prominent of them are certainly those that relate to one of the key NN features, the so-called, network's \emph{memory capacity} (see, e.g., \cite{Schlafli,Cover65,Winder,Winder61,Wendel62,Cameron60,Joseph60,Gar88,Ven86,BalVen87}). Here we continue the same trend and focus on several important capacity related questions and provide a strong theoretical progress. Before discussing in more detail some of the main problems and our technical contributions, we find it convenient to first introduce the basics of the NN models of our interest.

\subsection{Feed forward neural networks --  mathematical basics}
\label{sec:model}

To be able to properly introduce the network's memory capacity as the main object of our interest, we first discuss the underlying network architecture.

\noindent \textbf{\emph{Architecture:}} We are interested in multilayered multi-input single-output feed-forward neural networks with $L-2$ hidden layers and $d_i$ ($i\in\{1,2,\dots,L\}$) nodes (neurons) in the $i$-th layer. To ensure a notational facilitation, two additional layers, indexed by  $i=1$ and $i=L$ are artificially added and they correspond to the network input and output, respectively  (although the input and output of the network are basically artificial NN layers, to be in agreement with the introduced indexation, we will refer to them as networks layers). The way the network operates is basically determined by specifying
 the vectors of threshold functions, $\f^{(i)}(\cdot)=[\f_1^{(i)}(\cdot),\f_2^{(i)}(\cdot),\dots,\f_{d_{i+1}}^{(i)}(\cdot)]^T$. Each threshold function $\f_{j}^{(i)}(\cdot):\mR^{d_i}\rightarrow \mR$ describes how neuron $j$  in layer $i$ operates. The network effectively functions by taking the outputs of the nodes from layer $i$ as the inputs of the nodes in layer $i+1$ and transforming them into the new outputs (in layer $i+1$) via a linear combination governed by the matrix of weights $W^{(i)}\in\mR^{d_{i}\times d_{i+1}}$. After setting $\d=[d_1,d_2,\dots,d_L]$ (with $d_1=n$ and $d_L=1$) and denoting by $\b^{(i)}\in\mR^{d_{i+1}},i=1,2,\dots L$ the so-called thresholds vectors and by $\x^{(i)}\in\mR^{d_i}$ and $\x^{(i+1)}\in\mR^{d_{i+1}}$ the inputs and outputs of the neurons in layer $i$, one
 has the following:
\begin{center}
     	\tcbset{beamer,nobeforeafter,lower separated=false, fonttitle=\bfseries, coltext=black,
		interior style={top color=yellow!20!white, bottom color=yellow!60!white},title style={left color=black, right color=red!50!blue!60!white},
		before=,after=\hfill,fonttitle=\bfseries,equal height group=AT}
\begin{tcolorbox}[title=Mathematical formalism of NN with architecture $A(\d\text{, }\f^{(i)})$:]
\vspace{-.03in}$\mbox{\textbf{input:}} \triangleq \x^{(1)}\quad \longrightarrow$ \hfill
\tcbox[coltext=black,colback=white!65!red!30!orange,interior style={top color=yellow!20!white, bottom color=yellow!60!white},nobeforeafter,box align=base]{$ \x^{(i+1)}=\f^{(i)}(W^{(i)}\x^{(i)}-\b^{(i)}) $ }\hfill $\longrightarrow \quad \mbox{\textbf{output:}} \triangleq \x^{(L+1)}$.
\vspace{-.2in}\begin{equation}\label{eq:model0}
\vspace{-.2in}\end{equation}
\vspace{-.4in}\end{tcolorbox}
\end{center}
The architecture of the network, $A(\d;\f^{(i)})$,  is fully specified by the vectors $\d$ and $\f^{(i)}$. Also, when the vectors of functions, $\f^{(i)}$, are identical, we write $A(\d;\f)$ instead of $A(\d;\f^{(i)})$.

\noindent \textbf{\emph{Memory capacity:}} As mentioned earlier, one of the most fundamental features of any neural net (including single neurons as special cases) is their ability to properly memorize/store a large amount of data. A simple way to see how the above formalism achieves this is the following:  assume the existence of $m$ data pairs $(\x^{(0,k)},\y^{(0,k)})$, $k\in\{1,2,\dots,m\}$, with $\x^{(0,k)}\in \mR^{n}$ being the $n$-dimensional data vectors and $\y^{(0,k)}\in\mR$ being their associated labels. Determining matrices $W^{(i)}$  such that  
\begin{equation}\label{eq:model3}
\x^{(1)}=\x^{(0,k)}\quad \Longrightarrow \quad \x^{(L+1)}=\y^{(0,k)} \qquad \forall k,
\end{equation}
is then sufficient to properly relate given data vectors to their corresponding labels. If the network architecture $A(\d,\f^{(i)})$ is given then its \emph{memory capacity, $C(A(\d,\f^{(i)}))$,} is  defined as the largest sample size, $m$, such that (\ref{eq:model3}) holds for any collection of data pairs $(\x^{(0,k)},\y^{(0,k)})$, $k\in\{1,2,\dots,m\}$ with certain prescribed properties. Since the memory capacity plays one of the most important roles in understanding the overall neural nets' functioning mosaic, finding both, its precise theoretical characterization and the corresponding fast algorithmic procedure that achieves it, is of utmost importance. Of our particular interest in this paper are the theoretical aspects and we below provide a host of results that for many well known architectures almost fully characterize their capacities.

To facilitate the presentation, a few structural and technical assumptions are in place as well. Many of them, however, are aligned with the ones discussed in \cite{Stojnictcmspnncaprdt23,Stojnictcmspnncapliftedrdt23}. To avoid an unnecessary repetition, we only briefly recall on these and refer for a more detailed exposition to \cite{Stojnictcmspnncaprdt23,Stojnictcmspnncapliftedrdt23}. On the other hand, we place the most emphasis on those that are substantially different and particularly relevant to the results that we present in this paper.

\subsection{Technical assumptions}
\label{sec:assumpt}

To facilitate the exposition and to make the final results cleaner and easier to use, we will rely on several architectural and data related assumptions. We state them below before starting the analytical considerations. As they are fairly common and rather prevalent in the literature, we avoid discussing them in deep details.

\noindent \textbf{\emph{Network architecture assumptions:}} 1-hidden layer \emph{treelike committee machine} type of neural networks with \textbf{\emph{generic}} zero-threshold activation functions in the hidden layer are considered. This  means the following: \emph{\textbf{(i)}} We assume $L=3$, $\b^{(i)}=0$, $W^{(1)}=I_{n\times n}$, and $W^{(3)}=\w^T$, where $\w\in\mR^{d_2\times 1}$ (i.e. $W^{(3)}$ is a $d_2$-dimensional row vector that will be specified depending on the type of the considered activation functions). \emph{\textbf{(ii)}}  We also define $d\triangleq d_2$ and $\delta\triangleq \delta_1=\frac{d_1}{d_2}=\frac{n}{d}$. While the presented mathematical concepts will hold for any $d$, to be able to get concrete capacity values, we eventually assume that $d$ is any (even) natural number.
\textbf{\emph{(iii)}} In the first layer, we consider \emph{identity} neuronal functions, i.e. we take $\f^{(1)}(\x^{(1)})=\x^{(1)}$. In the hidden layer, we take the \emph{generic} zero-threshold activations $\f_j^{(2)}(\cdot): \mR^{d_2}\rightarrow R$ with $\f_j^{(2)}(\cdot)=\f_k^{(2)}(\cdot)$ for any $j\neq k$. Finally, at the output, we take zero-threshold sign activation $\f^{(3)}(W^{(3)}\x^{(3)}-\b^{(3)})=\mbox{sign} \left ( W^{(3)}\x^{(3)}\right )$. For notational simplicity, we denote this architecture by $A(\d;[\f^{(1)},\f^{(2)},\f^{(3)}])=A(\d;[I,\f^{(2)},\mbox{sign}])
\triangleq A(\d;\f^{(2)})$. \textbf{\emph{(iv)}}
 The matrix $W^{(i)}$ can be generically full or with a particular structure. Both types of structuring have been of interest throughout the literature. A particular type of sparse structuring, where the support of $W^{(i)}$'s $j$-th row, $\mbox{supp}\lp W_{j,:}^{(i)}\rp$, satisfies
$\mbox{supp}\lp W_{j,:}^{(i)}\rp=\cS^{(j)}$, with $\cS^{(j)}\triangleq\{(j-1)\delta+1,(j-1)\delta+2,\dots,j\delta\}$, makes the above architecture correspond to what is in the literature referred to as the \emph{treelike committee machines} (TCM). Precisely such architectures will be of our interest in this paper. For the completeness, we also add that if the matrix is full then the above architecture corresponds to what is in the literature typically referred to  as the \emph{fully connected committee machines} (FCM).

\noindent \textbf{\emph{Data related assumptions:}} \emph{\textbf{(i)}} \emph{Binary} labeling $\y_i^{(0,k)}\in\{-1,1\}$, as the most standard type of labeling, is assumed as well (choosing \emph{sign} perceptron as neuronal function at the output naturally complements the binary labeling choice as well). \emph{\textbf{(ii)}} Inseparable data sets are not allowed (for example, indistinguishable/contradictory pairs (or subgroups) like $(\x^{(0,k)},\y^{(0,k)})$ and $(\x^{(0,k)},-\y^{(0,k)})$ can not appear). \emph{\textbf{(iii)}} Data sets of statistical nature will be of our main interest. We particularly focus on $\x^{(0,k)}$ as iid standard normals. This follows into the footsteps of the trend established in the classical single perceptron references (see, e.g., \cite{DTbern,Gar88,StojnicGardGen13,Cover65,Winder,Winder61,Wendel62}) and allows for, expectedly, a fairly universal statistical treatment. It is also useful to note that for providing universal capacity upper bounds, any type of acceptable data set (including even nonstatistical ones) actually suffices.


\subsection{Prior work}
\label{sec:prior}

The problems of our interest are well known and have been studied in various forms for almost 70 years. Naturally, the early studies related to the single neurons while the more recent ones emphasize the importance of understanding the multi-neuron or multi-layered architectures. Given the pace at which the ML and NN fields are developing, the underlying relevant literature is rather vast and growing. We below highlight the results that we view as most closely related to our own.

Since the memory capacity of spherical perceptrons is directly connected to several fundamental questions in integral geometry, the early capacity considerations were related to some of the geometrical/probabilistic  classic works (see, e.g., \cite{Schlafli,Cover65,Wendel62,Joseph60}). Possibly the most famous of them states that the capacity of the spherical \emph{sign} perceptrons \emph{doubles the dimension} of the data ambient space, $n$, i.e., $C(A(1;\mbox{sign}))\rightarrow 2n$ as $n\rightarrow\infty$. After being initially obtained as a remarkable closed form combinatorial geometry fact in \cite{Schlafli,Cover65,Winder,Winder61,Wendel62,Cameron60,Joseph60}, it was decades later rediscovered and reproved in various different forms in a plethora of different fields ranging from machine learning and pattern recognition to information theory, probability, and statistical physics  (see, e.g., \cite{BalVen87,Ven86,DT,StojnicISIT2010binary,DonTan09Univ,DTbern,Gar88,StojnicGardGen13,StojnicGardSphErr13}).

\underline{\emph{Sign perceptrons networks:}} Despite the elegance of the single perceptron results, the corresponding multi-perceptron ones are not easy to obtain. Particularly scarce are the TCM related ones. While a bit more is known about the FCM ones, a direct connection between the two is not apparent. Besides the trivial fact that the FCM capacities upper-bound the corresponding TCM ones, one may also (somewhat ad-hoc) view the TCM capacities as roughly the FCM ones divided by $d$. Although non necessarily rigorous (or even fully correct) such a viewing suggests a potential usefulness of FCM results. Still, an overwhelming majority of known results indicates that the memory capacity is unavoidably related to the total number of the network weights, $w=\sum_{i=1}^{L-1} d_id_{i+1}$. For example, the famous VC-dimension \emph{qualitative} memory capacity upper bound gives the scaling $O(w\log(w))$. It is interesting to note that for NNs with 1-hidden layer, $w=d_1d_{2}+d_2=(n+1)d$ for FCM and $w=d_1+d_{2}=n+d$ for TCM. This, on the other hand, for large $d_i$'s and huge $n$, gives, the above mentioned, ``\emph{division by $d$}'' FCM -- TCM capacity relation. A couple of lower bounding results are known as well. For example, \cite{Baum88} argued that the capacity of a shallow 3-layer network (similar to the one studied here) scales as $O(nd)$. On the other hand, \cite{Vershynin20} obtained recently a stronger version, where, for the networks with more than three layers, the capacity is shown to be roughly at least $O(w)$.

Obtaining precise results of \textbf{\emph{non-scaling}} type in network architectures turned out to be a much harder challenge. This seems particularly surprising given the simplicity and elegance of the corresponding single perceptron ones. Until the very recent appearance of \cite{Stojnictcmspnncapliftedrdt23,Stojnictcmspnncaprdt23} there was hardly any mathematically rigorous result that could provide even remotely close sign perceptron networks capacity estimates.  \cite{Stojnictcmspnncaprdt23} utilized the Random duality theory (RDT) and developed a generic framework for the analysis of TCM network capacities. As a results of the framework, strong upper bounds were obtained for any given (odd) number of the nodes in the hidden layer. \cite{Stojnictcmspnncapliftedrdt23} went a step further, utilized a partially lifted RDT variant (pl RDT) and substantially lowered the upper bounds of \cite{Stojnictcmspnncaprdt23}.

\underline{\emph{Different activations networks:}} Given the importance of a single sign perceptron, studying their merging into a large architectural structure is a natural transition. Two things should be kept in mind when making such a transition though. First, it is not clear a priori that the results that hold for the single perceptron will hold in a similar fashion for the network of perceptrons. Second, the sign perceptrons are not continuous functions and handling them algorithmically when using or training the network might impose computationally/numerically unsurpassable obstacles. If on top of that, one adds the above mentioned analytical hardness, the need for potentially less simple but easier to use activation factions is rather obvious.

As the discreteness is typically perceived as the main source of both analytical and algorithmic sign perceptrons hardness, the natural choice for different activations leads towards allowing various continuous ones as well. Many of them have already found a steady place in NN architectures. Examples include but are not limited to ReLU, quadratic, tanh, erf and so on. Such activations make things a bit easier and, consequently, a little bit more is known about their capacities. For example, \cite{Yama93} suggested for deep nets and  \cite{GBHuang03} proved for 4-layer nets that the capacity is at least $O(w)$ for sigmoids. \cite{ZBHRV17,HardrtMa16} showed similar results for ReLU while additionally restricting on the number of nodes. Such a restriction though was later on removed in \cite{YunSuJad19} for both tanh and ReLU.

\underline{\emph{Statistical physics -- Replica theory:}} Given that the \emph{precise} capacity characterizations (as functions of the number of the hidden layer neurons $d$) do not allow for any \emph{qualitative/scaling} descriptions (say, of the $O(\cdot)$ type), the mathematically rigorous results are often very hard to achieve. In such situations, statistical physics replica methods are an excellent (and often irreplaceable) tool to produce, non-rigorous, but expectedly  \emph{precise} analyses.  As the hardness of the precise analytical studying of various NN features has been recognized in the mid-eighties of the last century, the application of the replica methods in capacity characterization has been around for close to four decades. The foundational concepts of such an approach were laid out in the pioneering works \cite{GarDer88,Gar88}, whre various forms of single perceptrons were discussed. Here, we focus more on the ensuing ones that relate to the network architectures. In particular, \cite{EKTVZ92,BHS92} studied the very same, TCM architecture, as we do (as well as the above mentioned, related, FCM one). For the sign perceptrons, they  obtained the closed form replica symmetry based capacity predictions for any number of the neurons in the hidden layer, $d$. They established the corresponding large $d$ scaling behavior. Both of these results were proven as mathematically rigorous capacity upper bounds in  \cite{Stojnictcmspnncapliftedrdt23,Stojnictcmspnncaprdt23}. Moreover, \cite{EKTVZ92,BHS92} showed that their replica symmetry based large $d$ predictions violate the mathematically rigorous ones obtained through the uniform-bounding extension of \cite{Cover65,Winder,Winder61,Wendel62} given in \cite{MitchDurb89}. This contradiction was remedied in \cite{EKTVZ92,BHS92} by studying the first level of replica symmetry breaking (rsb) and showing that it lowers the capacity. Related large $d$ scaling rsb considerations were also presented in \cite{MonZech95} for both the committee and the so-called parity machines (PM) (more on the earlier PM replica considerations can be found in, e.g., \cite{BarKan91,BHK90}). Also, for the FCM architecture, a bit later, \cite{Urban97,XiongKwonOh97} obtained the large $d$ scaling that matches the upper-bounding one of \cite{MitchDurb89}. Particularly relevant to our work are two very recent lines of work. \cite{BalMalZech19} first obtained the first level of rsb capacity predictions for the TCM architectures with the ReLU activations and \cite{ZavPeh21} moved things even further and obtained similar rsb predictions for several different activations, including the well known linear, ReLU, erf, quadratic, and tanh. A key difference with respect to our results should also be noted. Namely, both sets of results, \cite{BalMalZech19} and \cite{ZavPeh21}, relate to the networks with large $d$ (basically, to the networks with $d\rightarrow\infty$), whereas our results are obtained for any given (even) $d$.

\underline{\emph{Practical achievability:}} Another line of work attracted a lot of interest over the last several years and should be mentioned as well. It is related to the design and analysis of efficient algorithms that can be used to train the networks to potentially approach the capacity. The key focus has been on showing that the simple gradient based methods might actually perform quite well in this context. The so-called mild over-parametrization (moderately larger number of free parameters, $w$, compared to the size of the data set, $m$) is deemed as sufficing to ensure excellent performance of gradient based methods. More on the recent progress in these directions can be found in, e.g., \cite{DuZhaiPoc18,GeWangZhao19,ADHLW19,JiTel19,LiLiang18,OymSol19,RuoyuSun19,SongYang19,ZCZG18}. These results mostly relate to FCMs but are also extendable to TCMs as well.

\subsection{Contributions}
\label{sec:contrib}

The main object of our study is the so-called $n$-scaled memory capacity of the TCM NNs with various (different from standard \emph{sign} one) activation functions in the hidden layer. In other words, we study
\begin{equation}\label{eq:model4}
c(d;\f^{(2)})\triangleq\lim_{n\rightarrow\infty} \frac{C(A([n,d,1];\f^{(2)}))}{n}.
\end{equation}
A very strong progress in characterizing $C(A([n,d,1];\mbox{sign}))$ for any given (odd) $d$ has been made in \cite{Stojnictcmspnncaprdt23}. In particular, utilizing the powerful Random Duality Theory (RDT) mathematical engine, \cite{Stojnictcmspnncaprdt23} provides an explicit upper bound $\hat{c}(d;\mbox{sign})$ on $c(d;\mbox{sign})$. Numerical results obtained for smaller values of $d$ suggested a strong benefit in adding more neurons in a network architecture context. On the other hand, we, in this paper, make a substantial progress in several different aspects including both methodological and practical ones.

\underline{\emph{A summary of the main technical results of the paper:}}  \emph{\textbf{(i)}} We first show that the main framework from \cite{Stojnictcmspnncaprdt23,Stojnictcmspnncapliftedrdt23} (established relying on the RDT and pl RDt principles) for the capacity analysis of sign perceptron networks can be utilized for different activations as well. \emph{\textbf{(ii)}} To produce concrete capacity estimates, we then focus on several particular activations for which neat and convenient results can be obtained. We fist start with the \emph{linear} activations and show that the capacity of the TCM network with one hidden layer network is identical to the single spherical perceptron. We then switch to the \emph{quadratic} activations and obtain the close form analytical RDT based capacity upper bounds and their refined pl RDT counterparts. Finally, we consider the \emph{ReLU} activations and obtain the corresponding analytical closed form RDT and pl RDT results. \emph{\textbf{(iii)}} For all activations, we conduct the needed numerical evaluations to obtain the concrete capacity values as well. \emph{\textbf{(iv)}} All our results are obtained for an extremely challenging scenario where the number of the hidden layer neurons, $d$, can be any even positive integer. This allows us to uncover two rather fascinating phenomena: 1) Both thr RDT and the pl RDT estimates are decreasing for large $d$ while converging to a \emph{constant} (not dependent on $d$) value; and 2) The maxima of both estimates for both quadratic and ReLU activations are achieved for $d=2$ neurons in the hidden layer. This is a bit contrary to the common wisdom and in a strike contrast with the corresponding behavior of the sign activations where the capacity estimates grow with $d$. In particular, one effectively has that when it comes to the memory capabilities, uncontrollably increasing the width of the hidden layer may not always be among the most recommended architecture building strategies.

We show some of the concrete capacity estimates that we obtained for all the three mentioned activations, linear, quadratic, and ReLU, in Table \ref{tab:tab1}. In Figure \ref{fig:fig1} we also visualize the quadratic ones as well. Few key smallest values of $d$ are shown explicitly in the table and a much wider range of $d$ is shown in the figure (it goes without saying that the quadratic activation for $d=1$ does not make sense). For the completeness, we, in Figure \ref{fig:fig1}, also show the $d\rightarrow\infty$ asymptotics obtained for quadratic activations utilizing the replica methods in \cite{ZavPeh21}.

 \begin{table}[h]
  \caption{\textbf{\prp{Theoretical estimates}} of the $\f^{(2)}$-activated hidden layer TCM capacity upper bounds}
  \label{tab:tab1}
  \centering
  \begin{tabular}{cccccc}
    \hline\hline
  \textbf{Activation} &  \textbf{Upper bound on} & \textbf{Methodology}  & \multicolumn{3}{c}{$d$}                   \\
    \cline{4-6}
(function) &     $c(d;\f^{(2)})\triangleq\lim_{n\rightarrow\infty} \frac{C(A([n,d,1];\f^{(2)}))}{n}$ \hspace{-.1in} &     & $\mathbf{1}$   & $\mathbf{2}$     & $\mathbf{4}$ \\
    \hline\hline
  \textbf{\emph{linear}} &   $\bar{c}(d;\f^{(2)})$ & $ $  \hspace{.2in}  \bl{\textbf{RDT}} \hspace{.2in} $ $   & \bl{$\mathbf{2}$} & \bl{$\mathbf{2}$}  & \bl{$\mathbf{2}$}     \\
\cline{2-6}
  ($\f^{(2)}(\x)=\x$)  &  $ $  \hspace{.2in} $\hat{c}(d;\f^{(2)})$ \hspace{.2in} $ $ &  \prp{\textbf{pl RDT}}   & \prp{$\mathbf{2}$} & \prp{$\mathbf{2}$}  & \prp{$\mathbf{2}$}  \\
    \hline\hline
  \textbf{\emph{quadratic}} &   $\bar{c}(d;\f^{(2)})$ & $ $  \hspace{.2in}  \bl{\textbf{RDT}} \hspace{.2in} $ $   & \bl{$\mathbf{-}$} & \bl{$\mathbf{5.498}$}  & \bl{$\mathbf{4.660}$}     \\
\cline{2-6}
 ($\f^{(2)}(\x)=\x^2$)   &  $ $  \hspace{.2in} $\hat{c}(d;\f^{(2)})$ \hspace{.2in} $ $ &  \prp{\textbf{pl RDT}}   & \prp{$\mathbf{-}$} & \prp{$\mathbf{4.065}$}  & \prp{$\mathbf{3.657}$}  \\
    \hline\hline
  \textbf{\emph{ReLU}} &   $\bar{c}(d;\f^{(2)})$ & $ $  \hspace{.2in}  \bl{\textbf{RDT}} \hspace{.2in} $ $   & \bl{$\mathbf{2}$} & \bl{$\mathbf{3.810}$}  & \bl{$\mathbf{3.066}$}     \\
\cline{2-6}
($\f^{(2)}(\x)=\max (\x,0)$)   &  $ $  \hspace{.2in} $\hat{c}(d;\f^{(2)})$ \hspace{.2in} $ $ &  \prp{\textbf{pl RDT}}   & \prp{$\mathbf{2}$} & \prp{$\mathbf{3.810}$}  & \prp{$\mathbf{3.066}$}  \\
       \hline\hline
  \end{tabular}
\end{table}

\begin{figure}[h]
\centering
\centerline{\includegraphics[width=1\linewidth]{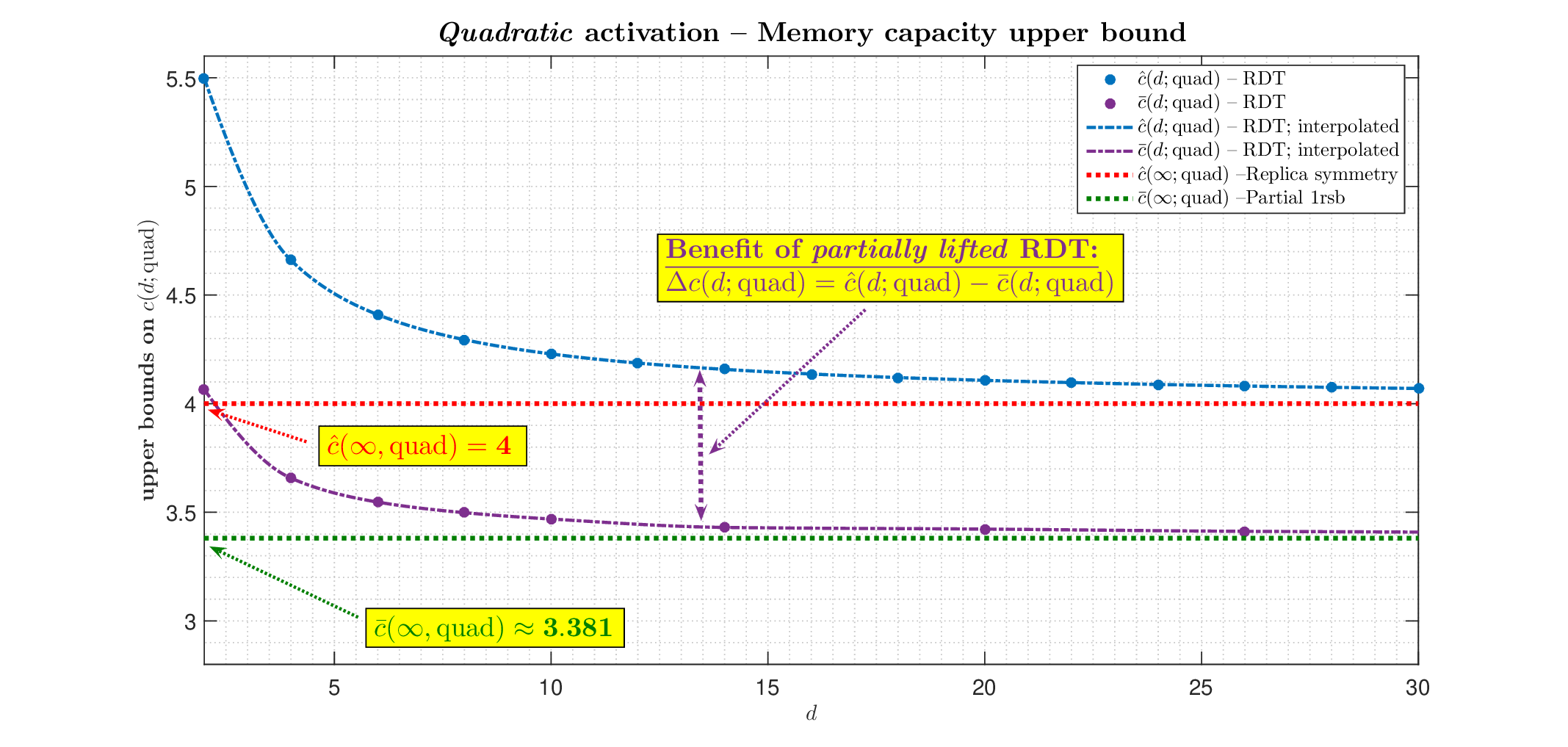}}
\caption{Memory capacity upper bound as a function of the number of neurons, $d$, in the hidden layer; 1-hidden layer TCM with \textbf{\emph{quadratic}} activations; \bl{\textbf{\emph{plain}  RDT}} versus \prp{\textbf{\emph{partially lifted} RDT}} (\red{\textbf{Replica symmetry (RS)}} and \dgr{\textbf{Partial 1rsb}} $d\rightarrow\infty$ estimates are included as well)}
\label{fig:fig1}
\end{figure}

\section{Algebraic description of network data processing}
\label{sec:amathdataproc}

To put everything on the right mathematical track and facilitate writing and overall presentation,  we first set $W\triangleq W^{(2)}$. After recalling that $W^{(1)}=I$ and $W^{(3)}=\w^T$, we then have for any $k\in\{1,2,\dots,m\}$
\begin{equation}\label{eq:ta1}
\x^{(1)}=\x^{(0,k)}  \quad \Longrightarrow \quad  \x^{(2)}=\f^{(1)}(W^{(1)}\x^{(1)})=\f^{(1)}(\x^{(1)})=\x^{(1)}=\x^{(0,k)},
\end{equation}
and
\begin{equation}\label{eq:ta2}
\x^{(2)}=\x^{(0,k)}  \quad \Longrightarrow \quad  \x^{(3)}=\f^{(2)}(W^{(2)}\x^{(2)})=\f^{(2)}(W\x^{(0,k)}),
\end{equation}
and
\begin{equation}\label{eq:ta3}
 \x^{(3)}=\f^{(2)}(W\x^{(0,k)}) \quad \Longrightarrow \quad  \x^{(4)}=\f^{(3)}(W^{(3)}\x^{(3)})=\f^{(3)}(\w^T\x^{(3)})
 =\mbox{sign}(\w^T  \f^{(2)}(W\x^{(0,k)})).
\end{equation}
A very neat closed-form explicit relation between the input and the output of the network can be obtained by connecting beginning in (\ref{eq:ta1}) and end in (\ref{eq:ta3})
\begin{equation}\label{eq:ta4}
\x^{(1)}=\x^{(0,k)} \quad \Longrightarrow \quad  \x^{(4)} =\mbox{sign}(\w^T  \f^{(2)}(W\x^{(0,k)})).
\end{equation}
For network to operate properly, it is then both necessary and sufficient that the following condition holds
\begin{equation}\label{eq:ta5}
\y^{(0,k)}=\mbox{sign}(\w^T  \f^{(2)}(W\x^{(0,k)})).
\end{equation}
After setting
\begin{equation}\label{eq:ta6}
\y\triangleq \begin{bmatrix}
               \y^{(0,1)} & \y^{(0,2)} & \dots & \y^{(0,m)}
            \end{bmatrix}^T   \qquad \mbox{and} \qquad X\triangleq \begin{bmatrix}
               \x^{(0,1)} & \x^{(0,2)} & \dots & \x^{(0,m)}
            \end{bmatrix}^T,
\end{equation}
one can then rewrite (\ref{eq:ta5}) in generic matrix form as
\begin{equation}\label{eq:ta7}
\left (\exists W\in\mR^{d\times n}| \|\y^T-\mbox{sign}(\w^T  \f^{(2)}(WX^T))\|_2=0 \right )  \quad \Longleftrightarrow \quad \left ( \left ( X,\y \right ) \mbox{is memorized} \right ),
\end{equation}
and the $k$-th data pair $\left ( \x^{(0,k)},\y^{(0,k)} \right )$ are the $k$-th rows of $m\times n$ matrix $X$ and $m\times 1$ column vector $\y$. This further leads to the following (effectively an alternative to (\ref{eq:ta7})):
\begin{center}
 	\tcbset{beamer,sidebyside,lower separated=false, fonttitle=\bfseries, coltext=black,
		interior style={top color=yellow!20!white, bottom color=yellow!60!white},title style={left color=black, right color=red!50!blue!60!white},
		width=(\linewidth-4pt)/4,before=,after=\hfill,fonttitle=\bfseries,equal height group=AT}
 	\begin{tcolorbox}[title=Algebraic memorization characterization of the $\f^{(2)}$-activated hidden layer TCMs:,sidebyside,width=1\linewidth]
\vspace{-.15in}\begin{eqnarray}\label{eq:ta8}
\hspace{-.3in} 0=\xi\triangleq \min_{W,Q} & & \hspace{-.1in}\|\y-\mbox{sign}(\f^{(2)}(Q) \w)\|_2 \nonumber \\
\hspace{-.5in} \mbox{subject to} & & \hspace{-.1in}XW^T=Q
\end{eqnarray}
 \tcblower
 \hspace{-.2in}$\Longleftrightarrow$ \hspace{.1in} Data set $\left (X,\y \right )$ is properly memorized.
 		\vspace{-.0in}
 	\end{tcolorbox}
\end{center}
Clearly, the above algebraic formulation is the key mathematical problem on the path towards ensuring proper network data memorization. We therefore analyze it below in more detail.

\section{Random Duality Theory (RDT) based capacity analysis}
\label{sec:ubrdt}

As mentioned earlier, we consider statistical data sets with elements of $X$ being iid standard normals. Due to rotational symmetry, one can then, without a loss of generality, assume that the elements of $\y$ are all equal to 1, i.e. one can assume that $\y=\1$. The above key optimization can then be rewritten
\begin{eqnarray}\label{eq:ta9}
\xi=\min_{Z,Q} & & \|\1-\mbox{sign}(\f^{(2)}(Q) \w)\|_2 \nonumber \\
 \mbox{subject to} & & XZ=Q,
\end{eqnarray}
where a cosmetic change, $Z=W^T$, is introduced to facilitate writing. As in \cite{Stojnictcmspnncapliftedrdt23,Stojnictcmspnncaprdt23}, we here consider the TCM architecture, with a sparse $Z$ that ensures that treelike network architecture. This basically means that we consider $Z$ such that the only nonzero elements of its $j$-th column are in rows $\cS^{(j)}\triangleq\{(j-1)\delta+1,(j-1)\delta+2,\dots,j\delta\}$. Utilizing this $Z$ specialization and given that the above problem is insensitive with respect to the
scaling of $Z$ or $Q$, one can write
\begin{eqnarray}\label{eq:ta9a}
\xi=\min_{Z,Q} & & \|\1-\mbox{sign}(\f^{(2)}(Q) \w)\|_2 \nonumber \\
 \mbox{subject to} & & XZ=Q\nonumber \\
  & & \|Z_{:,j}\|_2=1 \nonumber \\
  & & \mbox{supp}(Z_{:,j})=\cS^{(j)}, 1\leq j\leq d,
\end{eqnarray}
with $\|Z_{:,j}\|_2$ being the norm of the $j$-th column of $Z$. A further trivial rewriting of the above gives
\begin{eqnarray}\label{eq:ta9aa0}
\xi=\min_{\z^{(j)},Q} & & \|\1-\mbox{sign}(\f^{(2)}(Q) \w)\|_2 \nonumber \\
 \mbox{subject to} & & X^{(j)}\z^{(j)}=Q_{:,j}, 1\leq j\leq d, \nonumber \\
  & & \|\z^{(j)}\|_2=1 \nonumber \\
  & & \z^{(j)}\in\mR^{\delta}, Q\in\mR^{m\times d},
\end{eqnarray}
where $X^{(j)}=X_{:,\cS^{(j)}}\in\mR^{m\times \delta}$. We follow into the footsteps of \cite{Stojnictcmspnncapliftedrdt23,Stojnictcmspnncaprdt23} and to statistically analyze the optimizations in (\ref{eq:ta9a}) and (\ref{eq:ta9aa0}), we utilize the powerful mathematical engine called Random Duality Theory (RDT) developed in a long series of work \cite{StojnicCSetam09,StojnicICASSP10var,StojnicCSetamBlock09,StojnicICASSP10block,StojnicRegRndDlt10}. To make the presentation easier to follow, we will try to parallel as closely as possible the approach presented in \cite{Stojnictcmspnncaprdt23}. However, to avoid unnecessary repetitions, we only briefly recall on some of the concepts that are identical or very similar to the corresponding ones of \cite{Stojnictcmspnncaprdt23}, and instead focus on key differences. As in \cite{Stojnictcmspnncaprdt23}, we start by first summarizing the RDT main principles and then continue by discussing each of them within the context of our interest here.

\vspace{-.0in}\begin{center}
 	\tcbset{beamer,lower separated=false, fonttitle=\bfseries, coltext=black ,
		interior style={top color=yellow!20!white, bottom color=yellow!60!white},title style={left color=black!80!purple!60!cyan, right color=yellow!80!white},
		width=(\linewidth-4pt)/4,before=,after=\hfill,fonttitle=\bfseries}
 \begin{tcolorbox}[beamer,title={\small Summary of the RDT's main principles} \cite{StojnicCSetam09,StojnicRegRndDlt10}, width=1\linewidth]
\vspace{-.15in}
{\small \begin{eqnarray*}
 \begin{array}{ll}
\hspace{-.19in} \mbox{1) \emph{Finding underlying optimization algebraic representation}}
 & \hspace{-.0in} \mbox{2) \emph{Determining the random dual}} \\
\hspace{-.19in} \mbox{3) \emph{Handling the random dual}} &
 \hspace{-.0in} \mbox{4) \emph{Double-checking strong random duality.}}
 \end{array}
  \end{eqnarray*}}
\vspace{-.2in}
 \end{tcolorbox}
\end{center}\vspace{-.0in}

As in \cite{Stojnictcmspnncaprdt23}, all the key results (including both simple to more complicated ones) are formulated as lemmas and theorems.

\vspace{.1in}

\noindent \underline{1) \textbf{\emph{Algebraic memorization characterization:}}}  The following lemma summarizes the above algebraic discussion by providing a precise resulting optimization representation of the network memorization property
 and is a mirrored analogue to Lemma 1 in \cite{Stojnictcmspnncaprdt23}.
\begin{lemma}(Algebraic optimization representation)
Assume a 1-hidden layer TCM with architecture $A([n,d,1];\f^{(2)})$. Any given data set $\left (\x^{(0,k)},1\right )_{k=1:m}$ can not be properly memorized by the network if
\begin{equation}\label{eq:ta10}
  f_{rp}(X)>0,
\end{equation}
where
\begin{equation}\label{eq:ta11}
f_{rp}(X)\triangleq \frac{1}{\sqrt{n}}\min_{\|\z^{(j)}\|_2=1,Q} \max_{\Lambda\in\mR^{m\times d}} \|\1-\mbox{\emph{sign}}(\f^{(2)}(Q) \w)\|_2 +\sum_{j=1}^{d}(\Lambda_{:,j})^TX^{(j)}\z^{(j)} -\tr(\Lambda^TQ),
\end{equation}
and $X\triangleq \begin{bmatrix}
               \x^{(0,1)} & \x^{(0,2)} & \dots & \x^{(0,m)}
            \end{bmatrix}^T$.
  \label{lemma:lemma1}
\end{lemma}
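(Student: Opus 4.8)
The plan is to recognize that $f_{rp}(X)$ is nothing more than a Lagrangian reformulation of the memorization objective $\xi$ from (\ref{eq:ta9aa0}), rescaled by $1/\sqrt{n}$. Recall that the algebraic characterization (\ref{eq:ta8})--(\ref{eq:ta9aa0}) already establishes that the data set $\left (\x^{(0,k)},1\right )_{k=1:m}$ is properly memorized if and only if $\xi=0$, where
\begin{equation*}
\xi=\min_{\|\z^{(j)}\|_2=1,Q} \|\1-\mbox{sign}(\f^{(2)}(Q) \w)\|_2 \quad \mbox{subject to} \quad X^{(j)}\z^{(j)}=Q_{:,j}, \ 1\leq j\leq d .
\end{equation*}
Since $\xi$ is the minimum of a Euclidean norm over a nonempty feasible set, it is always finite and nonnegative; hence failure to memorize is equivalent to $\xi>0$.

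First I would introduce the multiplier matrix $\Lambda\in\mR^{m\times d}$ and pair its $j$-th column $\Lambda_{:,j}$ with the $j$-th equality constraint $X^{(j)}\z^{(j)}=Q_{:,j}$. Using $\tr(\Lambda^TQ)=\sum_{j=1}^d(\Lambda_{:,j})^TQ_{:,j}$, the penalty term appearing inside $f_{rp}(X)$ is exactly $\sum_{j=1}^{d}(\Lambda_{:,j})^T X^{(j)}\z^{(j)} - \tr(\Lambda^TQ)=\sum_{j=1}^d(\Lambda_{:,j})^T\left (X^{(j)}\z^{(j)}-Q_{:,j}\right )$, i.e. the inner product of the free multiplier with the vector of constraint residuals.

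The key step is to observe that the inner maximization over the unconstrained $\Lambda$ acts as an exact indicator of feasibility. For any fixed pair $(\{\z^{(j)}\},Q)$ with $\|\z^{(j)}\|_2=1$: if all residuals $X^{(j)}\z^{(j)}-Q_{:,j}$ vanish, every penalty term is identically zero and the maximum over $\Lambda$ equals the objective $\|\1-\mbox{sign}(\f^{(2)}(Q)\w)\|_2$; if some residual is nonzero, scaling the corresponding $\Lambda_{:,j}$ along that residual drives the penalty to $+\infty$. The outer minimization therefore automatically discards every infeasible point, so the min-max collapses onto the constrained minimization and equals $\xi$. Multiplying by $1/\sqrt{n}>0$ gives $\sqrt{n}\,f_{rp}(X)=\xi$, whence $f_{rp}(X)>0$ holds precisely when $\xi>0$, which by the above is exactly the non-memorization condition.

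I do not expect a genuine obstacle: this is an exact encoding of \emph{linear} equality constraints, so there is no duality gap to control and no constraint qualification to verify -- the indicator behavior of $\max_\Lambda$ is purely algebraic and holds for the fixed, prescribed order (outer $\min$, inner $\max$) without any appeal to a minimax exchange. The only points deserving a line of care are confirming that the feasible set is nonempty (immediate, since $Q_{:,j}=X^{(j)}\z^{(j)}$ is feasible for any unit $\z^{(j)}$) and noting that the lemma only asserts the one-sided implication $f_{rp}(X)>0 \Rightarrow$ non-memorization, which is the weaker half of the equivalence just derived.
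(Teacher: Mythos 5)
Your argument is correct and is exactly the standard Lagrangian/exact-penalty encoding of the linear equality constraints that underlies this representation; the paper itself dispenses with the proof by citing Lemma 1 of \cite{Stojnictcmspnncaprdt23}, where the same min--max construction is carried out. You correctly identify the two points that need care (nonemptiness of the feasible set and the fact that only the one-sided implication is claimed), so your write-up is a valid self-contained substitute for the citation.
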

\begin{proof}
Immediate consequence of Lemma 1 in \cite{Stojnictcmspnncaprdt23}.
\end{proof}

Of our interest below is mathematically the most challenging, so-called \emph{linear}, regime with
\begin{equation}\label{eq:ta14}
  \alpha\triangleq \lim_{n\rightarrow\infty}\frac{m}{n}.
\end{equation}
The above lemma is of purely algebraic nature and as such it holds for any given data set $\left (\x^{(0,k)},1\right )_{k=1:m}$. To analyze (\ref{eq:ta10}) and (\ref{eq:ta11}), the RDT further proceeds by accounting for a statistical $X$.


\vspace{.1in}
\noindent \underline{2) \textbf{\emph{Determining the random dual:}}} We follow the standard RDT practice and utilize the so-called concentration of measure property, which basically means that for any fixed $\epsilon >0$,  we have (see, e.g. \cite{StojnicCSetam09,StojnicRegRndDlt10,StojnicICASSP10var,Stojnictcmspnncaprdt23,Stojnictcmspnncapliftedrdt23})
\begin{equation}
\lim_{n\rightarrow\infty}\mP_X\left (\frac{|f_{rp}(X)-\mE_X(f_{rp}(X))|}{\mE_X(f_{rp}(X))}>\epsilon\right )\longrightarrow 0.\label{eq:ta15}
\end{equation}
Moreover, the following so-called random dual theorem is another key ingredient of the RDT machinery and is a mirrored alternative to Theorem 1 from \cite{Stojnictcmspnncaprdt23}.
\begin{theorem}(Memorization characterization via random dual) Let $d$ be any even positive integer. Consider TCM with $d$ neurons in the hidden layer, and architecture  $A([n,d,1];\f^{(2)})$, and let the elements of $X\in\mR^{m\times n}$, $G\in\mR^{m\times d}$, and $H\in\mR^{\delta\times d}$ be iid standard normals. Set
\vspace{-.0in}
\begin{eqnarray}
\phi(Q) & \triangleq & \|\1-\mbox{\emph{sign}}(\f^{(2)}(Q) \w)\|_2\nonumber \\
 f_{rd}(G,H) & \triangleq &
 \frac{1}{\sqrt{n}}\min_{\phi(Q)=0,\|\z^{(j)}\|_2=1}\max_{\Lambda \in R^{m\times d},\|\Lambda\|_F=1}\lp \tr(\Lambda^T G)+\sum_{j=1}^{d}\|\Lambda_{:,j}\|_2(H_{:,j})^T\z^{(j)} -\tr(\Lambda^TQ) \rp \nonumber \\
 \phi_0 & \triangleq & \lim_{n\rightarrow\infty} \mE_{G,H}f_{rd}(G,H)  .\label{eq:thm1ta16}
\vspace{-.0in}\end{eqnarray}
One then has \vspace{-.02in}
\begin{eqnarray}
\hspace{-.3in}(\phi_0  > 0)   &  \Longrightarrow  & \lp \lim_{n\rightarrow\infty}\mP_{X}(f_{rd}>0)\longrightarrow 1\rp
\quad  \Longrightarrow \quad \lp \lim_{n\rightarrow\infty}\mP_{X}(f_{rp}>0)\longrightarrow 1 \rp  \nonumber \\
& \Longrightarrow & \lp \lim_{n\rightarrow\infty}\mP_{X}(A([n,d,1];\f^{(2)}) \quad \mbox{fails to memorize data set} \quad (X,\1))\longrightarrow 1\rp.\label{eq:thm1ta17}
\end{eqnarray}
\label{thm:thm1}
\end{theorem}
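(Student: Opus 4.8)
The plan is to recognize this as an instance of the standard RDT random-dual mechanism, whose analytical core is Gordon's Gaussian min-max comparison inequality combined with the concentration property already recorded in (\ref{eq:ta15}). I would prove the three implications of (\ref{eq:thm1ta17}) in reverse order of difficulty: the last one is immediate from Lemma \ref{lemma:lemma1}, the first is pure concentration, and the middle one carries the real content. To begin, I would recast the primal $f_{rp}(X)$ of (\ref{eq:ta11}) so that it matches the structure of $f_{rd}$. The inner unconstrained maximization over $\Lambda\in\mR^{m\times d}$ in (\ref{eq:ta11}) enforces the block equalities $X^{(j)}\z^{(j)}=Q_{:,j}$, so $f_{rp}(X)>0$ is equivalent to infeasibility of the system $\{X^{(j)}\z^{(j)}=Q_{:,j},\ \phi(Q)=0,\ \|\z^{(j)}\|_2=1\}$. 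Writing $R_{:,j}\triangleq X^{(j)}\z^{(j)}-Q_{:,j}$ and using $\max_{\|\Lambda\|_F=1}\tr(\Lambda^TR)=\|R\|_F$, this infeasibility is detected by the positivity of
\[
\frac{1}{\sqrt{n}}\min_{\phi(Q)=0,\|\z^{(j)}\|_2=1}\max_{\|\Lambda\|_F=1}\lp \sum_{j=1}^{d}(\Lambda_{:,j})^TX^{(j)}\z^{(j)}-\tr(\Lambda^TQ)\rp,
\]
which is exactly $f_{rp}$ with the memorization term turned into the constraint $\phi(Q)=0$ and $\Lambda$ restricted to the unit Frobenius sphere. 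This is the precise primal counterpart of $f_{rd}$, the only difference being that the Gaussian data enter through the bilinear form $\sum_{j}(\Lambda_{:,j})^TX^{(j)}\z^{(j)}$ rather than through $G$ and $H$.

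The central step is the Gordon comparison. Because the TCM support sets $\cS^{(j)}$ are disjoint, the blocks $X^{(j)}=X_{:,\cS^{(j)}}$ are mutually independent standard Gaussian matrices, so the covariance of the Gaussian process $(\Lambda,\{\z^{(j)}\})\mapsto \sum_{j}(\Lambda_{:,j})^TX^{(j)}\z^{(j)}$ splits over $j$. This is exactly the setting of the block form of the RDT comparison inequality \cite{StojnicCSetamBlock09,StojnicICASSP10block,StojnicCSetam09,StojnicRegRndDlt10}, which (using $\|\z^{(j)}\|_2=1$) replaces the bilinear term by the decoupled comparison process $\tr(\Lambda^TG)+\sum_{j}\|\Lambda_{:,j}\|_2(H_{:,j})^T\z^{(j)}$ with $G\in\mR^{m\times d}$ and $H\in\mR^{\delta\times d}$ independent standard Gaussians. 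Gordon's inequality then yields, for every threshold $c$, a one-sided probabilistic domination $\mP(f_{rp}(X)<c)\leq 2\,\mP(f_{rd}(G,H)\leq c)$, so that $\mP(f_{rd}>c)\tendsto 1$ forces $\mP(f_{rp}\geq c)\tendsto 1$.

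To close the chain I would invoke concentration. Applying (\ref{eq:ta15}) to $f_{rd}$ shows that $f_{rd}(G,H)$ concentrates around $\phi_0$; hence if $\phi_0>0$ one fixes $c\in(0,\phi_0)$ and obtains $\mP(f_{rd}>c)\tendsto 1$, which is the first implication of (\ref{eq:thm1ta17}). Feeding this into the Gordon domination gives $\mP(f_{rp}\geq c)\tendsto 1$, hence $\mP(f_{rp}>0)\tendsto 1$, the middle implication; Lemma \ref{lemma:lemma1} then converts $f_{rp}>0$ into the failure-to-memorize conclusion, the last implication.

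I expect the main obstacle to be the rigorous justification of the middle step: verifying that the constraint sets here, namely the non-convex set $\{Q:\phi(Q)=0\}$, the product of unit spheres $\|\z^{(j)}\|_2=1$, and the Frobenius sphere $\|\Lambda\|_F=1$, fit the hypotheses of the block Gordon inequality. The saving grace is that only the single-sided bound $\mP(f_{rp}<c)\leq 2\,\mP(f_{rd}\leq c)$ is needed; unlike the reverse (CGMT) direction it requires no convexity and therefore tolerates the spherical and discrete constraints, and this one-sided bound is exactly what an upper bound on capacity demands. The rest is bookkeeping: the $1/\sqrt{n}$ normalization, the simultaneous limits $m/n\to\alpha$ with fixed $\delta=n/d$, and the interchange of the $n\to\infty$ limit with expectation in the definition of $\phi_0$, all handled as in \cite{Stojnictcmspnncaprdt23}.
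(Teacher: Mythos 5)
Your proposal is correct and follows exactly the route the paper relies on: the paper's own ``proof'' is simply a citation of Theorem 1 in \cite{Stojnictcmspnncaprdt23}, and what you have written out --- the algebraic reduction via Lemma \ref{lemma:lemma1}, the one-sided block Gordon comparison exploiting the disjoint supports $\cS^{(j)}$ (which needs no convexity in this direction), and Gaussian concentration to pass from $\phi_0>0$ to $\mP(f_{rd}>0)\to 1$ --- is a faithful reconstruction of that cited argument. No gaps; your identification of the middle implication as the step carrying the real content, and of the non-convex constraint sets as harmless for the one-sided bound, matches the RDT mechanism the paper invokes.
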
\vspace{-.17in}
\begin{proof}
Immediate consequence of Theorem 1 in \cite{Stojnictcmspnncaprdt23}.
\end{proof}
%
%
%
%
%
\vspace{.1in}
\noindent \underline{3) \textbf{\emph{Handling the random dual:}}} Proceeding as in  \cite{Stojnictcmspnncaprdt23}, one first solves the inner maximization over $\Lambda$ and then the minimization over $\z^{(j)}$ to obtain for $f_{rd}(G,H)$ from (\ref{eq:thm1ta16})
\begin{eqnarray}
 f_{rd}(G,H) & = &    \frac{1}{\sqrt{n}}
 \min_{\phi(Q)=0}\sqrt{ \|G-Q\|_F^2-2\sum_{j=1}^{d} \|G_{:,j}-Q_{:,j}\|_2\|H_{:,j}\|_2 +\|H\|_F^2}.\label{eq:ta18}
\end{eqnarray}
This then further gives
\begin{eqnarray}
\phi_0 & = & \lim_{n\rightarrow \infty}\mE_{G,H} f_{rd}(G,H)  =  \lim_{n\rightarrow \infty}\mE_{G} \frac{1}{\sqrt{n}}
 \min_{\phi(Q)=0} \|G-Q\|_F- 1,\label{eq:ta18a0}
\end{eqnarray}
where, as discussed in  \cite{Stojnictcmspnncaprdt23}, the above equality obtained relying on the concentrations can be replaced by an inequality if one alternatively relies on the Cauchy-Schwartz  inequalities (both options are perfectly  sufficient for the subsequent analysis).

The remaining focus is on the residual optimization over $Q$. To that end we set
\begin{equation}\label{eq:ta19}
  \phi_i(Q_{i,1:d})\triangleq  \mbox{sign}(\f^{(2)}(Q_{i,1:d}) \w),
\end{equation}
and further write
\begin{eqnarray}\label{eq:ta20}
   \min_{\phi(Q)=0} \|G-Q\|_F &=& \sqrt{\min_{\phi(Q)=0} \sum_{i=1}^{m}\sum_{j=1}^{d}(G_{ij}-Q_{ij})^2}  =  \sqrt{\sum_{i=1}^{m} \min_{\phi_i(Q_{i,1:d})=1}  \sum_{j=1}^{d}(G_{ij}-Q_{ij})^2} \nonumber \\
   &=& \sqrt{\sum_{i=1}^{m}  z_i(G_{i,1:d})},
\end{eqnarray}
with
\begin{equation}\label{eq:ta21}
  z_i(G_{i,1:d})\triangleq \min_{\phi_i(Q_{i,1:d})=1}  \sum_{j=1}^{d}(G_{ij}-Q_{ij})^2.
\end{equation}
To further facilitate writing and remove unnecessary notations, we will set
\begin{eqnarray}\label{eq:lin2}
 \g & \triangleq  &  G_{i,1:d}^T \nonumber \\
 \q & \triangleq  &  Q_{i,1:d}^T,
\end{eqnarray}
and
\begin{equation}\label{eq:ta21defzi}
  z_i(\g;\f^{(2)}) \triangleq  z_i(G_{i,1:d})= \min_{\phi_i(Q_{i,1:d})=1}  \sum_{j=1}^{d}(G_{ij}-Q_{ij})^2
  =\min_{\f^{(2)}(Q_{i,1:d}) \w\geq 0}  \sum_{j=1}^{d}(G_{ij}-Q_{ij})^2 =\min_{\f^{(2)}(\q^T) \w\geq 0}  \|\g-\q\|_2^2.
\end{equation}
The above mechanism is generic and in principle applies to any type of activation $\f^{(2)}$. To obtain concrete capacity estimates, we below proceed by considering several particular activation examples that have attracted a significant attention in NN literature.

\subsection{Different $\f^{(2)}$ activations}
\label{sec:diffact}

We focus on three well known $\f^{(2)}$ activations: i) \emph{linear}, ii) \emph{quadratic}, and iii) \emph{ReLU}. For each of them we obtain relatively  elegant final capacity bounding estimates.

\subsubsection{Linear hidden layer activations -- $\f^{(2)}(\x)=\x$}
\label{sec:lin}

Since the linear function is odd (i.e., since $f^{(2)}(-\x)=-\x=-\f^{(2)}(\x)$), we can, without a loss of generality, assume that, say, $\w=\1$, where $\1$ is the column vector  of appropriate dimension with all of its components equal to one. However, as the analysis below shows, linearity is a very particular form of activation where such an assumption is actually not needed. In fact, any $\w$ suffices. This should be kept in mind for later on when we study other two types of activations where $\w$ will have to take particular forms to ensure that network functioning actually makes sense at all. From (\ref{eq:ta19})
 and (\ref{eq:ta21}), we then recognize the key optimization problem of interest
\begin{equation}\label{eq:lin1}
  z_i(G_{i,1:d}) =\min_{\mbox{sign}(\f^{(2)}(Q_{i,1:d}) \w)=1}  \sum_{j=1}^{d}(G_{ij}-Q_{ij})^2
  =\min_{\f^{(2)}(Q_{i,1:d}) \w\geq 0}  \sum_{j=1}^{d}(G_{ij}-Q_{ij})^2.
\end{equation}
For the linear activation, one can then rewrite (\ref{eq:lin1}) as
\begin{equation}\label{eq:lin3}
  z_i(\g;\mbox{lin})  =  z_i(G_{i,1:d})  =\min_{\f^{(2)}(Q_{i,1:d}) \w\geq 0}  \sum_{j=1}^{d}(G_{ij}-Q_{ij})^2
  =\min_{\f^{(2)}(\q^T) \w\geq 0}  \|\g-\q\|_2^2  =\min_{\q^T\w\geq 0}  \|\g-\q\|_2^2.
\end{equation}
Proceeding by writing Lagrangian, we then further have
\begin{equation}\label{eq:lin4}
  z_i(\g;\mbox{lin})  =\min_{\q^T\w\geq 0}  \|\g-\q\|_2^2
 =\min_{\q} \max_{\lambda\geq 0} \|\g-\q\|_2^2 -\lambda(\q^T\w).
\end{equation}
Relying on the strong duality, one then also finds
\begin{equation}\label{eq:lin5}
  z_i(\g;\mbox{lin})    =\min_{\q} \max_{\lambda\geq 0} \|\g-\q\|_2^2 -\lambda(\q^T\w)
   =\max_{\lambda\geq 0} \min_{\q} \|\g-\q\|_2^2 -\lambda(\q^T\w).
\end{equation}
To solve the inner optimization over $\q$, we then consider the following derivative
\begin{equation}\label{eq:lin6}
 \frac{d\lp \|\g-\q\|_2^2 -\lambda(\q^T\w)\rp}{d\q}=2(\q-\g)+\lambda\w.
\end{equation}
After equalling the above derivative to zero one then finds
\begin{equation}\label{eq:lin7}
 \q=\g-\frac{\lambda\w}{2}.
\end{equation}
Plugging back this value in the objective in (\ref{eq:lin5}), one obtains
\begin{equation}\label{eq:lin8}
  z_i(\g;\mbox{lin})    =\max_{\lambda\geq 0} \min_{\q}  \|\g-\q\|_2^2 -\lambda(\q^T\w)
  =\max_{\lambda\geq 0} -\frac{\lambda^2\|\w\|_2^2}{4}-\lambda\g^T\w.
\end{equation}
Optimizing further over $\lambda$, one finds
\begin{equation}\label{eq:lin9}
\lambda_{opt}=\max\lp-\frac{2\g^T\w}{\|\w\|_2^2},0\rp.
\end{equation}
After plugging $\lambda_{opt}$ back in (\ref{eq:lin8}), one obtains
\begin{equation}\label{eq:lin10}
  z_i(\g;\mbox{lin}) = \frac{\lp\max\lp-\g^T\w,0\rp\rp^2}{\|\w\|_2^2}= \lp\max\lp-\g^T\frac{\w}{\|\w\|_2},0\rp\rp^2= \lp\max\lp g_i,0\rp\rp^2,
\end{equation}
where $g_i$ is a standard normal random variable. Connecting (\ref{eq:ta18a0}), (\ref{eq:ta20}), (\ref{eq:lin3}), and (\ref{eq:lin10}), we finally have
\begin{eqnarray}
\phi_0 & = &   \lim_{n\rightarrow \infty}\mE_{G} \frac{1}{\sqrt{n}}
 \min_{\phi(Q)=0} \|G-Q\|_F- 1 \nonumber \\
 & = &   \lim_{n\rightarrow \infty}\mE_{G} \frac{1}{\sqrt{n}}
 \sqrt{\sum_{i=1}^{m}  z_i(G_{i,1:d})}- 1 \nonumber \\
 & = &    \sqrt{\alpha \mE  z_i(\g;\mbox{lin}) }- 1 \nonumber \\
 & = &    \sqrt{\alpha \mE \lp\max\lp g_i,0\rp\rp^2}- 1 \nonumber \\
 & = &    \sqrt{\frac{\alpha}{2}}- 1.\label{eq:lin11}
\end{eqnarray}

 We summarize the above results in the following lemma.
\begin{lemma}(Memory capacity; \textbf{linear} activation) Assume the setup of Theorem \ref{thm:thm1}. For linear $\f^{(2)}(\x)=\x$, let $c(d;\mbox{lin})\triangleq c(d;\f^{(2)}(\x)=\x)$ be the $n$-scaled memory capacity from (\ref{eq:model4}). One then has the following for such
\vspace{-.0in}
\vspace{-.0in}\begin{center}
\tcbset{beamer,lower separated=false, fonttitle=\bfseries,
coltext=black , interior style={top color=orange!10!yellow!30!white, bottom color=yellow!80!yellow!50!white}, title style={left color=orange!10!cyan!30!blue, right color=green!70!blue!20!black}}
 \begin{tcolorbox}[beamer,title=\textbf{($n$-scaled) memory capacity:},lower separated=false, fonttitle=\bfseries,width=.42\linewidth] 
\vspace{-.15in}
 \begin{eqnarray*}
\hspace{-.0in} c(d;\mbox{lin})=  \mathbf{2}. \end{eqnarray*}
\vspace{-.15in}
 \end{tcolorbox}
\end{center}\vspace{-.0in}
If and only if the sample complexity $m$ is such that $\alpha\triangleq \lim_{n\rightarrow\infty}\frac{m}{n}\geq c(d;\mbox{lin})$ then
\begin{eqnarray}
 \lim_{n\rightarrow\infty}\mP_{X}(A([n,d,1];\mbox{sign}) \quad \mbox{fails to memorize data set} \quad (X,\1))\longrightarrow 1.\label{eq:lemma2ta30}
\end{eqnarray}
 \label{lemma:lemma2}
\end{lemma}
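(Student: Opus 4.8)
The plan is to harvest the capacity directly from the random-dual computation already assembled in (\ref{eq:lin1})--(\ref{eq:lin11}), and then to upgrade the resulting inequality to an exact threshold by exploiting the degeneracy of the linear activation. The RDT side is in fact already complete: (\ref{eq:lin11}) reduces $\phi_0$ to the single Gaussian moment $\mE\lp\max\lp g_i,0\rp\rp^2=\tfrac12$, yielding $\phi_0=\sqrt{\alpha/2}-1$. The remaining mechanical step is to feed the sign of $\phi_0$ into Theorem \ref{thm:thm1}, whose chain of implications converts $\phi_0>0$ into ``the network fails to memorize $(X,\1)$ with probability tending to $1$''.

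Locating the threshold is then immediate: $\phi_0>0\Leftrightarrow\sqrt{\alpha/2}>1\Leftrightarrow\alpha>2$, and, crucially, this is independent of $d$. Hence Theorem \ref{thm:thm1} forces failure for every $\alpha>2$, which already establishes the upper bound $c(d;\mbox{lin})\leq 2$. The boundary value $\alpha=2$ (where $\phi_0=0$) I would absorb into the failure side by the usual monotonicity argument: $\phi_0$ is continuous and strictly increasing in $\alpha$, and the capacity is defined as the largest admissible sample complexity, so $\alpha=2$ sits exactly at the transition and is reported on the ``fails'' side as stated.

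To turn ``$\leq$'' into ``$=$'' (equivalently, to secure the ``only if'' half), I would exploit the fact that the linear activation collapses the entire treelike architecture to a single spherical perceptron. Indeed, for $\f^{(2)}(\x)=\x$ one has $\mbox{sign}(\w^T\f^{(2)}(WX^T))=\mbox{sign}(\w^TWX^T)=\mbox{sign}(\v^TX^T)$ with $\v\triangleq Z\w=\sum_{j=1}^{d}w_j\z^{(j)}$. Because the blocks $\cS^{(j)}$ partition $\{1,\dots,n\}$ and each $\z^{(j)}$ ranges freely over its own block, and since (as stressed earlier in this subsection) $\w$ is here a free architectural parameter, the vector $\v$ sweeps out all of $\mR^n$. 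Thus the treelike linear TCM realizes precisely the halfspace dichotomies of an unconstrained single spherical perceptron on $\mR^n$, whose capacity is the classical $2n$ (and for which the RDT estimate is itself known to be tight). This supplies the matching lower bound: for any $\alpha<2$ a feasible $\v$ exists with high probability, so memorization succeeds, and the statement sharpens to the claimed ``if and only if'' with $c(d;\mbox{lin})=2$.

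The one genuinely non-routine point -- and the place I expect resistance -- is this tightness half, since RDT on its own only ever delivers an upper bound. The argument hinges on the reduction to a single perceptron being lossless, so I would verify carefully that the per-column constraints $\|\z^{(j)}\|_2=1$ do not shrink the set of attainable directions $\v$: with $\w$ unconstrained this is transparent, and even were $\w$ fixed, only the homogeneous sign of $\v^T\x^{(0,k)}$ matters, so the block normalizations are harmless. Everything else -- the Lagrangian duality in (\ref{eq:lin4})--(\ref{eq:lin8}), the optimal multiplier $\lambda_{opt}$ in (\ref{eq:lin9}), and the final Gaussian expectation -- is bookkeeping that the excerpt has already carried out.
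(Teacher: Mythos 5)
Your upper-bound half is exactly the paper's argument: the RDT chain (\ref{eq:lin1})--(\ref{eq:lin11}) gives $\phi_0=\sqrt{\alpha/2}-1$, Theorem \ref{thm:thm1} converts $\phi_0>0$ into failure of memorization, and the threshold $\alpha=2$ is read off independently of $d$, yielding $c(d;\mbox{lin})\le 2$. The divergence is in how equality is secured. The paper closes the gap by invoking strong random duality: the residual optimization is convex and deterministic strong duality holds, so the machinery of \cite{StojnicRegRndDlt10,StojnicUpper10,StojnicGorEx10} upgrades the upper bound to an exact threshold. You instead collapse the architecture to a single perceptron via $\v=Z\w=\sum_{j=1}^{d}w_j\z^{(j)}$ and appeal to the classical $2n$ capacity. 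The reduction correctly identifies the computed function as a halfspace classifier, but the lower bound you extract from it has a genuine gap.

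The gap is in the claim that $\v$ sweeps out all of $\mR^n$. In the paper's setup $\w$ is part of the architecture specification ($W^{(3)}=\w^T$ is fixed in advance), and the memorization optimization (\ref{eq:ta8})--(\ref{eq:ta9aa0}) minimizes only over $W$ and $Q$ subject to the per-block normalizations $\|\z^{(j)}\|_2=1$. Consequently the restriction of $\v$ to block $\cS^{(j)}$ is forced to have Euclidean norm exactly $|w_j|$, so $\v$ is confined to a product of spheres --- a positive-codimension subset of the sphere in $\mR^n$. Your remark that ``only the homogeneous sign matters, so the block normalizations are harmless'' conflates global rescaling of $\v$ (indeed harmless) with the relative scaling between blocks (not harmless: it is precisely what the fixed $w_j$ together with $\|\z^{(j)}\|_2=1$ pin down, and it does change the realized sign pattern). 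The classical Cover/Winder $2n$ result for the unconstrained spherical perceptron therefore does not transfer directly: for $\alpha<2$ the feasible cone $\lp\{\v: \v^T\x^{(0,k)}>0\ \forall k\}\rp$ is nonempty with high probability, but one must still show it contains a point with the prescribed block-norm profile, and an open convex cone can perfectly well miss a product of spheres. Showing that with high probability it does not is essentially the content of the strong-random-duality step the paper relies on, so your route does not actually bypass it. (If $\w$ were trainable your reduction would be airtight, but that is a strictly richer architecture than the one the lemma addresses.)
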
\vspace{-.17in}
\begin{proof}
Follows immediately from the above discussion.
\end{proof}
The above lemma effectively states that only when the sample complexity $m$ is such that $m>2n$ (with $n$ being the data vectors' ambient dimension) the network fails to memorize the data. Consequently, one has for the memory capacity of the $d$ hidden layer \emph{linearly} activated neurons TCMs, $C(A([n,d,1];\mbox{lin}))=2n$. This shows that the capacity does not change as the width of the hidden layer increases. Moreover, it shows that the capacity remains equal to the capacity of the single spherical perceptron neuron (see, e.g.,
\cite{Schlafli,Cover65,Winder,Winder61,Wendel62,Cameron60,Joseph60,BalVen87,Ven86,DT,StojnicISIT2010binary,DonTan09Univ,DTbern,Gar88,StojnicGardGen13,StojnicGardSphErr13}).

\noindent \underline{4) \textbf{\emph{Double checking the strong random duality:}}} Strictly speaking the above analysis establishes the upper bound on the capacity. However, given the presence of the underlying convexity and the strong deterministic duality, the machinery of \cite{StojnicRegRndDlt10,StojnicUpper10,StojnicGorEx10} ensures that the strong random duality is in place as well which then implies that the established upper bounds are in fact tight.

\subsubsection{Quadratic hidden layer activations -- $\f^{(2)}(\x)=\x^2$}
\label{sec:quad}

Since the quadratic function $\f^{(2)}(\x)=\x^2\geq 0$ one needs to carefully make a choice for vector $\w$ which ensures that the network functioning is of any use. Clearly, some of the components of $\w$  must be negative. Given the symmetry of the quadratic function, a natural choice that is typically considered in the literature in such situations is $\w$ with $\frac{d}{2}$ 1s and $\frac{d}{2}$ -1s. For the concreteness, we set
\begin{equation}\label{eq:quad00}
\w=\begin{bmatrix}
     -\1_{\frac{d}{2}\times 1} \\ \1_{\frac{d}{2}\times 1}
   \end{bmatrix},
\end{equation}
and to facilitate exposition avoid dimensional subscripts and simply write
 \begin{equation}\label{eq:quad0}
\w=\begin{bmatrix}
     -\1 \\ \1
   \end{bmatrix},
\end{equation}
assuming that the size of the column vectors $\1$ is $\frac{d}{2}\times 1$. As earlier, relying on (\ref{eq:ta19}), (\ref{eq:ta21}), (\ref{eq:lin2}), and (\ref{eq:lin1}), we then recognize the following key optimization problem of interest
 \begin{equation}\label{eq:quad3}
  z_i(\g;\mbox{quad})  =  z_i(G_{i,1:d})  =\min_{\f^{(2)}(Q_{i,1:d}) \w\geq 0}  \sum_{j=1}^{d}(G_{ij}-Q_{ij})^2
  =\min_{\f^{(2)}(\q^T) \w\geq 0}  \|\g-\q\|_2^2  =\min_{(\q^2)^T\w\geq 0}  \|\g-\q\|_2^2.
\end{equation}
After effectively splitting the problem into two parts, an interesting formulation can be obtained
 \begin{eqnarray}\label{eq:quad3a1}
  z_i(\g;\mbox{quad})   =  \min_{\q} & &   \sum_{i=1}^{\frac{d}{2}}(\g_i-\q_i)^2+ \sum_{i=\frac{d}{2}+1}^d(\g_i-\q_i)^2 \nonumber \\
\mbox{subject to}   & &  \sum_{i=1}^{\frac{d}{2}} \q_i^2\leq  \sum_{i=\frac{d}{2}+1}^d \q_i^2.
\end{eqnarray}
For a moment, we find it convenient to set $\sum_{i=\frac{d}{2}+1}^d \q_i^2=b$, define $\g^{(r)}\triangleq \g_{\frac{d}{2}+1:d}$, and look at the following optimization problem
 \begin{eqnarray}\label{eq:quad3a2}
  z_i^{(r)}(\g^{(r)},b)   =  \min_{\q} & &   \sum_{i=\frac{d}{2}+1}^d(\g_i-\q_i)^2 \nonumber \\
\mbox{subject to}   & &  \sum_{i=\frac{d}{2}+1}^d \q_i^2=b.
\end{eqnarray}
One can then trivially rewrite (\ref{eq:quad3a2}) as
 \begin{eqnarray}\label{eq:quad3a3}
  z_i^{(r)}(\g^{(r)},b)   =  \min_{\q} & &   \sum_{i=\frac{d}{2}+1}^d\g_i^2-2\sum_{i=\frac{d}{2}+1}^d\g_i\q_i +b \nonumber \\
\mbox{subject to}   & &  \sum_{i=\frac{d}{2}+1}^d \q_i^2=b.
\end{eqnarray}
Solving (\ref{eq:quad3a3}) then gives
 \begin{eqnarray}\label{eq:quad3a4}
  z_i^{(r)}(\g^{(r)},b) & = & \lp\sum_{i=\frac{d}{2}+1}^d\g_i^2-2\sqrt{b}\sqrt{\sum_{i=\frac{d}{2}+1}^d\g_i^2} +b\rp \nonumber \\
 & = & \lp\sum_{i=\frac{d}{2}+1}^d\g_i^2-2\sqrt{b}\sqrt{\sum_{i=\frac{d}{2}+1}^d\g_i^2} +b\rp  \nonumber \\
 & = & \lp\sqrt{\sum_{i=\frac{d}{2}+1}^d\g_i^2} -\sqrt{b}\rp^2.
\end{eqnarray}
One can then rewrite (\ref{eq:quad3a1}) as
 \begin{eqnarray}\label{eq:quad3a5}
  z_i(\g;\mbox{quad})   =  \min_{\q,b} & &   \sum_{i=1}^{\frac{d}{2}}(\g_i-\q_i)^2+   z_i^{(r)}(\g^{(r)},b) \nonumber \\
\mbox{subject to}   & &  \sum_{i=1}^{\frac{d}{2}} \q_i^2\leq  b.
\end{eqnarray}
A few additional algebraic transformations give
 \begin{eqnarray}\label{eq:quad3a6}
  z_i(\g;\mbox{quad})   =  \min_{\q,b} & &   \sum_{i=1}^{\frac{d}{2}}\g_i^2-2\sum_{i=1}^{\frac{d}{2}}\g_i\q_i + \sum_{i=1}^{\frac{d}{2}}\q_i^2+   z_i^{(r)}(\g^{(r)},b) \nonumber \\
\mbox{subject to}   & &  \sum_{i=1}^{\frac{d}{2}} \q_i^2\leq  b.
\end{eqnarray}
We again for a moment set $\sum_{i=}^{\frac{d}{2}} \q_i^2=b_1\leq b$, define $\g^{(l)}\triangleq \g_{1:\frac{d}{2}}$, and find
 \begin{eqnarray}\label{eq:quad3a6}
  z_i(\g;\mbox{quad})   =  \min_{b_1\leq b} & &   z_i^{(l)}(\g^{(l)},b_1)+   z_i^{(r)}(\g^{(r)},b),
\end{eqnarray}
where, analogously to (\ref{eq:quad3a4}), we also have
 \begin{eqnarray}\label{eq:quad3a7}
  z_i^{(l)}(\g^{(l)},b)   & = & \lp\sqrt{\sum_{i=1}^{\frac{d}{2}}\g_i^2} -\sqrt{b_1}\rp^2.
\end{eqnarray}
Clearly,
 \begin{eqnarray}\label{eq:quad3a7a1}
\sqrt{\sum_{i=1}^{\frac{d}{2}}\g_i^2}\leq \sqrt{\sum_{i=\frac{d}{2}+1}^d\g_i^2} \qquad \implies \qquad  z_i(\g)=0.
\end{eqnarray}
On the other hand, if $\sqrt{\sum_{i=1}^{\frac{d}{2}}\g_i^2}> \sqrt{\sum_{i=\frac{d}{2}+1}^d\g_i^2}$ then $b_1=b$ and the optimization from (\ref{eq:quad3a6}) becomes
 \begin{eqnarray}\label{eq:quad3a8}
  z_i(\g;\mbox{quad})   =  \min_{b} \lp\lp\sqrt{\sum_{i=1}^{\frac{d}{2}}\g_i^2} -\sqrt{b}\rp^2+ \lp\sqrt{\sum_{i=\frac{d}{2}+1}^d\g_i^2} -\sqrt{b}\rp^2 \rp.
\end{eqnarray}
Optimizing over $b$ one then finds
 \begin{eqnarray}\label{eq:quad3a9}
 \sqrt{b}=\frac{\sqrt{\sum_{i=1}^{\frac{d}{2}}\g_i^2}+ \sqrt{\sum_{i=\frac{d}{2}+1}^d\g_i^2}}{2},
\end{eqnarray}
and
 \begin{eqnarray}\label{eq:quad3a10}
\sqrt{\sum_{i=1}^{\frac{d}{2}}\g_i^2}> \sqrt{\sum_{i=\frac{d}{2}+1}^d\g_i^2} \qquad \implies \qquad   z_i(\g;\mbox{quad})   =  \frac{\lp\sqrt{\sum_{i=1}^{\frac{d}{2}}\g_i^2}- \sqrt{\sum_{i=\frac{d}{2}+1}^d\g_i^2}\rp^2}{2}.
\end{eqnarray}
Writing (\ref{eq:quad3a7a1}) and (\ref{eq:quad3a10}) in a more compact form gives
 \begin{eqnarray}\label{eq:quad3a11}
  z_i(\g;\mbox{quad})   =  \frac{\lp \max\lp\sqrt{\sum_{i=1}^{\frac{d}{2}}\g_i^2}- \sqrt{\sum_{i=\frac{d}{2}+1}^d\g_i^2},0\rp\rp^2}{2}.
\end{eqnarray}
Moreover, one also has
 \begin{eqnarray}\label{eq:quad3a12}
  z_i(\g;\mbox{quad})   =  \frac{\lp \max\lp a_i^{(1)}-  a_i^{(2)},0\rp\rp^2}{2},
\end{eqnarray}
where $ a_i^{(1)}$ and $ a_i^{(2)}$ are independent chi random variables with $\frac{d}{2}$ degrees of freedom. Connecting (\ref{eq:ta18a0}), (\ref{eq:ta20}), (\ref{eq:quad3}), and (\ref{eq:quad3a12}), we finally have
\begin{eqnarray}
\phi_0 & = &   \lim_{n\rightarrow \infty}\mE_{G} \frac{1}{\sqrt{n}}
 \min_{\phi(Q)=0} \|G-Q\|_F- 1 \nonumber \\
 & = &   \lim_{n\rightarrow \infty}\mE_{G} \frac{1}{\sqrt{n}}
 \sqrt{\sum_{i=1}^{m}  z_i(G_{i,1:d})}- 1 \nonumber \\
 & = &    \sqrt{\alpha \mE z_i(\g;\mbox{quad}) }- 1\nonumber \\
  & = &    \sqrt{\alpha \mE \frac{\lp \max\lp a_i^{(1)}-  a_i^{(2)},0\rp\rp^2}{2} }- 1.\label{eq:quad11}
\end{eqnarray}
Given the pdf of the chi random variable with $\frac{d}{2}$ degrees of freedom
\begin{eqnarray}
f_{\chi}(a) = \frac{2^{1-\frac{d}{4}}}{\gamma(\frac{d}{4})}a^{\frac{d}{2}-1}e^{-\frac{a^2}{2}},\label{eq:quad12}
\end{eqnarray}
one can also find
\begin{eqnarray}
  \mE \lp \max\lp a_i^{(1)}-  a_i^{(2)},0\rp\rp^2
  =\int_{0}^{\infty}\int_{0}^{a_i^{(1)}} \lp a_i^{(1)}-  a_i^{(2)} \rp^2 f_{\chi}(a_i^{(2)}) f_{\chi}(a_i^{(1)})da_i^{(2)}da_i^{(1)}.\label{eq:quad13}
\end{eqnarray}

 We summarize the above results in the following lemma.
\begin{lemma}(Memory capacity upper bound; \textbf{quadratic} activation) Assume the setup of Theorem \ref{thm:thm1}. For quadratic $\f^{(2)}(\x)=\x^2$, let $c(d;\mbox{quad})\triangleq c(d;\f^{(2)}(\x)=\x^2)$ be the $n$-scaled memory capacity from (\ref{eq:model4}). Let $a_i^{(1)}$ and $a_i^{(2)}$ be independent, chi distributed, random variables with $\frac{d}{2}$ degrees of freedom and let $f_{\chi}(a)$ be as in (\ref{eq:quad12}). One then has the following
\vspace{-.0in}
\vspace{-.0in}\begin{center}
\tcbset{beamer,lower separated=false, fonttitle=\bfseries,
coltext=black , interior style={top color=orange!10!yellow!30!white, bottom color=yellow!80!yellow!50!white}, title style={left color=orange!10!cyan!30!blue, right color=green!70!blue!20!black}}
 \begin{tcolorbox}[beamer,title=\textbf{($n$-scaled) memory capacity upper bound:},lower separated=false, fonttitle=\bfseries,width=.92\linewidth] 
\vspace{-.15in}
 \begin{eqnarray*}
\hspace{-.0in} \hat{c}(d;\mbox{quad})=  \frac{2}{\mE \lp \max\lp a_i^{(1)}-  a_i^{(2)},0\rp\rp^2}
=  \frac{2}{\int_{0}^{\infty}\int_{0}^{a_i^{(1)}} \lp a_i^{(1)}-  a_i^{(2)} \rp^2 f_{\chi}(a_i^{(2)}) f_{\chi}(a_i^{(1)})da_i^{(2)}da_i^{(1)}}. \end{eqnarray*}
\vspace{-.15in}
 \end{tcolorbox}
\end{center}\vspace{-.0in}
Then for any sample complexity $m$ such that $\alpha\triangleq \lim_{n\rightarrow\infty}\frac{m}{n}>\hat{c}(d;\mbox{quad})$
\begin{eqnarray}
 \lim_{n\rightarrow\infty}\mP_{X}(A([n,d,1];\mbox{quad}) \quad \mbox{fails to memorize data set} \quad (X,\1))\longrightarrow 1,\label{eq:lemma3quadta30}
\end{eqnarray}
and
\begin{eqnarray}
 \lim_{n\rightarrow\infty}\mP_{X}(c(d,\mbox{quad})<\hat{c}(d,\mbox{quad}))\longrightarrow 1.\label{eq:lemma3quadta30a}
\end{eqnarray}
 \label{lemma:lemma3}
\end{lemma}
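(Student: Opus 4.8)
The plan is to read the conclusion directly off the random-dual threshold criterion of Theorem~\ref{thm:thm1}, feeding it the explicit value of $\phi_0$ that the surrounding derivation has already assembled. Theorem~\ref{thm:thm1} asserts that $\phi_0>0$ forces $\lim_{n\to\infty}\mP_X(A([n,d,1];\f^{(2)})\text{ fails to memorize }(X,\1))\to 1$, so the entire task reduces to locating the sign change of $\phi_0$ as a function of $\alpha$.

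First I would collect the reduction chain already in place: solving the inner maximization over $\Lambda$ and the minimization over $\z^{(j)}$ collapses $f_{rd}$ to (\ref{eq:ta18a0}); the row-separability in (\ref{eq:ta20})--(\ref{eq:ta21defzi}) turns the $Q$-minimization into $\sqrt{\sum_{i=1}^m z_i(G_{i,1:d})}$; and the constrained quadratic program (\ref{eq:quad3})--(\ref{eq:quad3a12}) evaluates each summand in closed form as $z_i(\g;\mbox{quad})=\frac{1}{2}\lp\max(a_i^{(1)}-a_i^{(2)},0)\rp^2$, where $a_i^{(1)}=\|\g^{(l)}\|_2$ and $a_i^{(2)}=\|\g^{(r)}\|_2$ are independent $\chi$ variables with $\frac{d}{2}$ degrees of freedom. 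A concentration/law-of-large-numbers step then replaces the empirical mean $\frac{1}{n}\sum_{i=1}^m z_i$ by $\alpha\,\mE z_i$ (equivalently, via the Cauchy--Schwartz route noted after (\ref{eq:ta18a0})), yielding $\phi_0=\sqrt{\frac{\alpha}{2}\,\mE(\max(a_i^{(1)}-a_i^{(2)},0))^2}-1$.

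The final algebraic step is to solve $\phi_0>0$. Since the expectation is a positive constant depending only on $d$, one has $\phi_0>0$ exactly when $\frac{\alpha}{2}\mE(\max(a_i^{(1)}-a_i^{(2)},0))^2>1$, i.e.\ when $\alpha>\hat{c}(d;\mbox{quad})=2/\mE(\max(a_i^{(1)}-a_i^{(2)},0))^2$; substituting the $\chi$-pdf (\ref{eq:quad12}) into (\ref{eq:quad13}) renders this threshold explicit as the stated double integral, establishing (\ref{eq:lemma3quadta30}). For the second assertion (\ref{eq:lemma3quadta30a}) I would argue that, because memorization fails with probability tending to one for every $\alpha$ strictly above $\hat{c}(d;\mbox{quad})$, the $n$-scaled capacity cannot exceed this value, so $c(d,\mbox{quad})<\hat{c}(d,\mbox{quad})$ holds in the stated probabilistic sense.

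Essentially all the genuine work has been front-loaded into the body, so inside the lemma the only delicate points are bookkeeping ones: confirming the $\chi$-distribution identification (it rests on $\g$ having i.i.d.\ standard-normal coordinates, so each half-norm is $\chi_{d/2}$ and the two halves are independent) and justifying the passage from the empirical average to its expectation. I expect the latter --- the concentration underlying (\ref{eq:ta18a0}) --- to be the only step meriting care, and even there the remark following (\ref{eq:ta18a0}), which licenses a Cauchy--Schwartz inequality in place of the concentration equality, already removes the difficulty. Unlike the linear case, no tightness/strong-duality claim is attempted here (the per-row program is not handled by the convexity argument of Section~4), which is precisely why the result is phrased as an upper bound rather than an exact capacity.
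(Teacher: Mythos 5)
Your proposal is correct and follows essentially the same route as the paper: the paper's proof is literally ``follows immediately from the above discussion,'' and that discussion is precisely the chain you reconstruct --- the reduction (\ref{eq:ta18a0})--(\ref{eq:ta21defzi}), the closed form $z_i(\g;\mbox{quad})=\frac{1}{2}\lp\max(a_i^{(1)}-a_i^{(2)},0)\rp^2$ from (\ref{eq:quad3})--(\ref{eq:quad3a12}), and the sign change of $\phi_0=\sqrt{\alpha\,\mE z_i(\g;\mbox{quad})}-1$ in (\ref{eq:quad11}). Your identification of the two half-norms as independent $\chi_{d/2}$ variables and your remark on why no strong-duality tightness claim is made both match the paper's treatment.
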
\vspace{-.17in}
\begin{proof}
Follows immediately from the above discussion.
\end{proof}
Taking, say, $d=2$ for the concreteness, one finds $\hat{c}(d;\mbox{quad})=5.4978 $, which basically means that when the sample complexity $m$ is such that $m>5.4978n$ (with $n$ being the data vectors' ambient dimension) the network fails to memorize the data. Consequently, one has for the memory capacity of the $2$ hidden layer \emph{quadratically} activated neurons TCMs, $C(A([n,2,1];\mbox{quad}))\leq 5.4978n$. The results for a wider range of $d$ are shown in Figure \ref{fig:fig2}. We also add the $\hat{c}(\infty;\mbox{quad})=4$, replica symmetry based prediction obtained in \cite{ZavPeh21}. As the figure indicates, one has that the RDT upper bound from the above theorem approaches the $d\rightarrow\infty$ replica symmetry based prediction already for fairly narrow nets with the number of neurons of the order of a couple of tens. We should also add that due to the fact that the underlying problems are now highly non-convex, the strong random duality considerations from \cite{StojnicRegRndDlt10,StojnicUpper10,StojnicGorEx10} are inapplicable. As we will see a bit later on, the results obtained above (and shown in Figure \ref{fig:fig2}), are in fact \emph{strict} (non-tight) capacity upper bounds.

\begin{figure}[h]
\centering
\centerline{\includegraphics[width=1\linewidth]{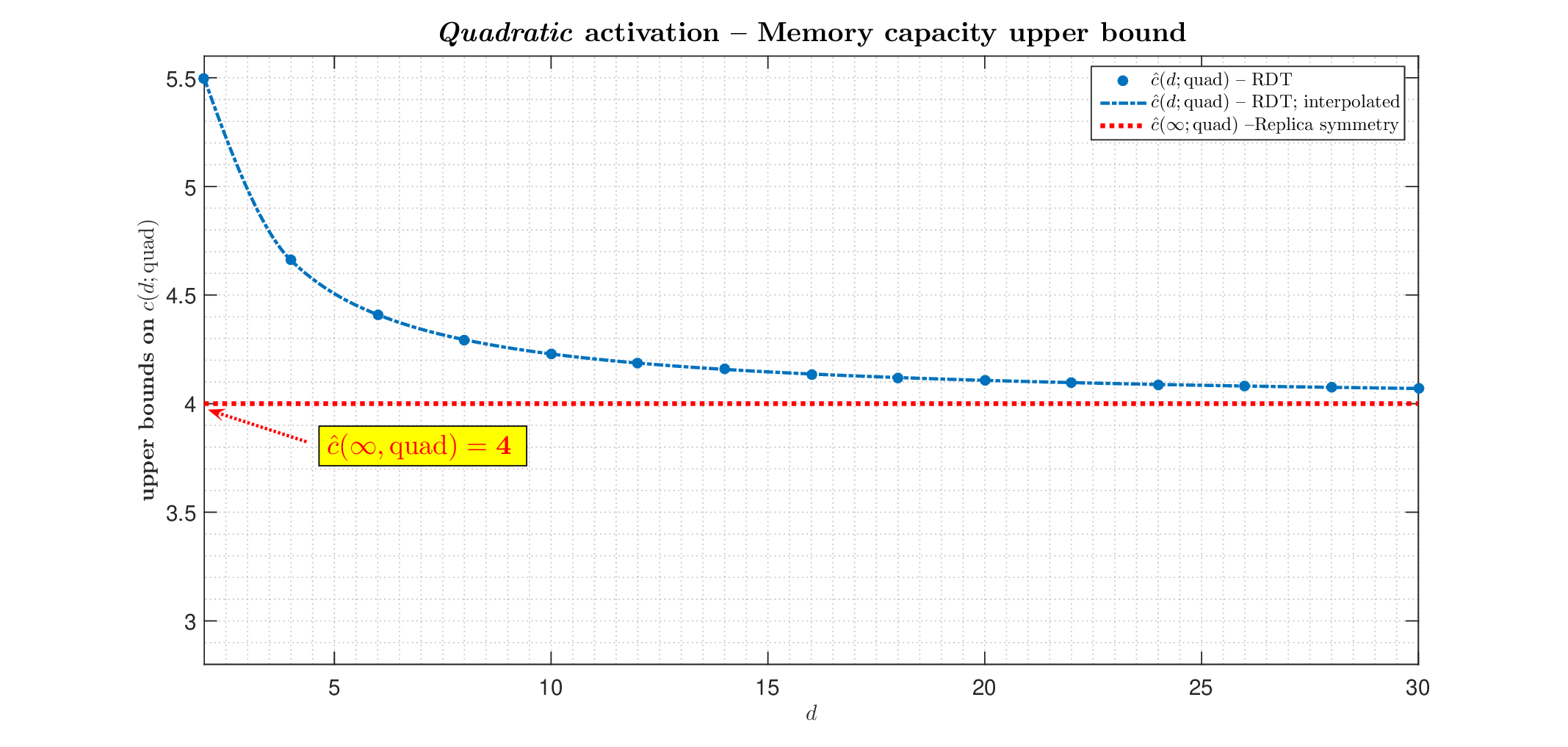}}
\caption{Memory capacity upper bound as a function of the number of neurons, $d$, in the hidden layer; 1-hidden layer TCM with \textbf{\emph{quadratic}} activations; \bl{\textbf{\emph{plain}  RDT}} estimate (\red{\textbf{Replica symmetry (RS)}} $d\rightarrow\infty$ estimate is included as well)}
\label{fig:fig2}
\end{figure}

\subsubsection{ReLU hidden layer activations -- $\f^{(2)}(\x)=\max(\x,0)$}
\label{sec:relu}

As was the case above when we considered the quadratic activation, for ReLU (rectified linear unit) one has  $\f^{(2)}(\x)=\max(\x,0)\geq 0$. This means that one again needs to carefully make a choice for vector $\w$ which ensures that the network functioning makes sense. Moreover, one again easily observes that some of the components of $\w$  must be negative. We follow the trend set above and in the ReLU relevant literature and consider $\w$ with $\frac{d}{2}$ 1s and $\frac{d}{2}$ -1s, i.e., we again take
 \begin{equation}\label{eq:relu0}
\w=\begin{bmatrix}
     -\1 \\ \1
   \end{bmatrix},
\end{equation}
while assuming that the size of the column vectors $\1$ is $\frac{d}{2}\times 1$. After again relying on (\ref{eq:ta19}), (\ref{eq:ta21}), (\ref{eq:lin1}), and (\ref{eq:lin2}), one recognizes the following key optimization problem of interest
 \begin{equation}\label{eq:relu3}
  z_i(\g;\mbox{relu})  =  z_i(G_{i,1:d})  =\min_{\f^{(2)}(Q_{i,1:d}) \w\geq 0}  \sum_{j=1}^{d}(G_{ij}-Q_{ij})^2
  =\min_{\f^{(2)}(\q^T) \w\geq 0}  \|\g-\q\|_2^2  =\min_{\max(\q,0)^T\w\geq 0}  \|\g-\q\|_2^2.
\end{equation}
Splitting the problem into two parts, gives the following formulation
 \begin{eqnarray}\label{eq:relu3a1}
  z_i(\g;\mbox{relu})   =  \min_{\q} & &   \sum_{i=1}^{\frac{d}{2}}(\g_i-\q_i)^2+ \sum_{i=\frac{d}{2}+1}^d(\g_i-\q_i)^2 \nonumber \\
\mbox{subject to}   & &  \sum_{i=1}^{\frac{d}{2}}  \max(\q_i,0) \leq  \sum_{i=\frac{d}{2}+1}^d \max(\q_i,0).
\end{eqnarray}

\subsubsubsection{$d=2$}
\label{sec:d2}

We study separately the simplest case $d=2$. There are two reasons for doing so: 1) One can obtain a neat and explicit closed form final result; and 2) Somewhat counter-intuitively, it will turn out that the bounding capacities are decreasing functions of $d$.

When $d=2$, (\ref{eq:relu3a1}) becomes
 \begin{eqnarray}\label{eq:relud2a1}
  z_i(\g;\mbox{relu}_2)   =  \min_{\q} & &   (\g_1-\q_1)^2+ (\g_2-\q_2)^2 \nonumber \\
\mbox{subject to}   & & \max(\q_1,0) \leq  \max(\q_2,0).
\end{eqnarray}
For $\g_1\leq 0$ one easily has $ z_i(\g)=0$. Also, one trivially has that for $\g_2\geq \g_1$, $ z_i(\g)=0$. We then focus on scenario where $\g_1\geq 0$ and $\g_2\leq \g_1$ happen simultaneously. One first finds that for $\g_2\leq (1-\sqrt{2}) \g_1$, $ z_i(\g)=\g_1^2$. On the other hand for $ (1-\sqrt{2}) \g_1 \geq \g_2\leq \g_1$, $ z_i(\g)=\min\lp\g_1^2,2\lp\frac{\g_1-\g_2}{2}\rp^2\rp=\frac{\lp\g_1-\g_2\rp^2}{2}$. In a more compact form one then has
 \begin{eqnarray}\label{eq:relud2a1}
  z_i(\g;\mbox{relu}_2)   =  \begin{cases}
                 0, & \mbox{if } \g_1\leq 0 \quad \mbox{or} \quad \g_2\geq \g_1\geq 0 \\
                 \g_1^2, & \mbox{if } \g_1\geq 0 \quad \mbox{and} \quad \g_2\leq (1-\sqrt{2}) \g_1 \\
                 \frac{\lp\g_1-\g_2\rp^2}{2}, & \mbox{if } \g_1\geq 0 \quad \mbox{and} \quad (1-\sqrt{2}) \g_1\leq \g_2\leq \g_1.
               \end{cases}.
\end{eqnarray}
Connecting (\ref{eq:ta18a0}), (\ref{eq:ta20}), (\ref{eq:relu3}), (\ref{eq:relu3a1}), and (\ref{eq:relud2a1}), we obtain
\begin{eqnarray}
\phi_0 & = &   \lim_{n\rightarrow \infty}\mE_{G} \frac{1}{\sqrt{n}}
 \min_{\phi(Q)=0} \|G-Q\|_F- 1 \nonumber \\
 & = &   \lim_{n\rightarrow \infty}\mE_{G} \frac{1}{\sqrt{n}}
 \sqrt{\sum_{i=1}^{m}  z_i(G_{i,1:d})}- 1 \nonumber \\
 & = &    \sqrt{\alpha \mE   z_i(\g;\mbox{relu}_2)  }- 1.\label{eq:relud1a11}
\end{eqnarray}
Moreover, we also have
\begin{eqnarray}
\mE   z_i(\g;\mbox{relu}_2)
& = &   \int_{0}^{\infty}\int_{-\infty}^{\infty}  z_i(\g;\mbox{relu}_2) \frac{e^{-\frac{\g_2^2}{2}}}{\sqrt{2\pi}}\frac{e^{-\frac{\g_1^2}{2}}}{\sqrt{2\pi}}d\g_2d\g_1 \nonumber \\
& = &   \int_{0}^{\infty}
\lp \int_{-\infty}^{(1-\sqrt{2})\g_1}\g_1^2  \frac{e^{-\frac{\g_2^2}{2}}}{\sqrt{2\pi}}
+ \int_{(1-\sqrt{2})\g_1}^{\g_1} \frac{\lp\g_1-\g_2\rp^2}{2}  \frac{e^{-\frac{\g_2^2}{2}}}{\sqrt{2\pi}}
\rp
\frac{e^{-\frac{\g_1^2}{2}}}{\sqrt{2\pi}}d\g_2d\g_1 \nonumber \\
& = &   \int_{0}^{\infty}
\lp I_1(\g_1)+I_2(\g_1)
\rp
\frac{e^{-\frac{\g_1^2}{2}}}{\sqrt{2\pi}}d\g_1,\label{eq:relud1a13}
\end{eqnarray}
where
\begin{eqnarray}
I_1(\g_1) & \triangleq & \g_1^2 \frac{\erfc\lp \frac{(\sqrt{2}-1)\g_1}{\sqrt{2}} \rp}{2} \nonumber \\
I_2(\g_1) & \triangleq & \frac{1}{4}\lp(\g_1^2 + 1)\lp\erf\lp\frac{\g_1}{\sqrt{2}}\rp-\erf\lp\frac{(1-\sqrt{2})\g_1}{\sqrt{2}}\rp\rp + \sqrt{\frac{2}{\pi}}\lp \g_1 e^{-\frac{\g_1^2}{2}} - (1+\sqrt{2})\g_1e^{-\frac{\lp (1-\sqrt{2})\g_1 \rp^2}{2}} \rp \rp. \nonumber \\
\label{eq:relud1a14}
\end{eqnarray}

 We summarize the above results in the following lemma.
\begin{lemma}(Memory capacity upper bound; \textbf{ReLU} activation; $d=2$) Assume the setup of Theorem \ref{thm:thm1}. For rectified linear unit (ReLU) $\f^{(2)}(\x)=\max(\x,0)$, let $c(d;\mbox{relu})\triangleq c(d;\f^{(2)}(\x)=\max(\x,0)))$ be the $n$-scaled memory capacity from (\ref{eq:model4}). Let $I_1(\g_1)$ and $I_2(\g_1)$ be as in (\ref{eq:relud1a14}). One then has the following
\vspace{-.0in}
\vspace{-.0in}\begin{center}
\tcbset{beamer,lower separated=false, fonttitle=\bfseries,
coltext=black , interior style={top color=orange!10!yellow!30!white, bottom color=yellow!80!yellow!50!white}, title style={left color=orange!10!cyan!30!blue, right color=green!70!blue!20!black}}
 \begin{tcolorbox}[beamer,title=\textbf{($n$-scaled) memory capacity upper bound:},lower separated=false, fonttitle=\bfseries,width=.65\linewidth] 
\vspace{-.15in}
 \begin{eqnarray*}
\hspace{-.0in} \hat{c}(2;\mbox{relu})
=  \frac{1}{\int_{0}^{\infty}
\lp I_1(\g_1)+I_2(\g_1)
\rp
\frac{e^{-\frac{\g_1^2}{2}}}{\sqrt{2\pi}}d\g_1}\approx \mathbf{3.81}. \end{eqnarray*}
\vspace{-.15in}
 \end{tcolorbox}
\end{center}\vspace{-.0in}
Then for any sample complexity $m$ such that $\alpha\triangleq \lim_{n\rightarrow\infty}\frac{m}{n}>\hat{c}(2;\mbox{relu})$
\begin{eqnarray}
 \lim_{n\rightarrow\infty}\mP_{X}(A([n,2,1];\mbox{relu}) \quad \mbox{fails to memorize data set} \quad (X,\1))\longrightarrow 1,\label{eq:lemma4reluta30}
\end{eqnarray}
and
\begin{eqnarray}
 \lim_{n\rightarrow\infty}\mP_{X}(c(2,\mbox{relu})<\hat{c}(2,\mbox{relu}))\longrightarrow 1.\label{eq:lemma4reluta30a}
\end{eqnarray}
 \label{lemma:lemma4}
\end{lemma}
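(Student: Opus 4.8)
The plan is to reduce the claim to the single scalar quantity $\mE z_i(\g;\mbox{relu}_2)$ and then evaluate that expectation in closed form. The implication chain of Theorem \ref{thm:thm1} shows that memorization fails with probability tending to one as soon as $\phi_0>0$, and the handling of the random dual has already collapsed $\phi_0$ to the one-dimensional expression $\phi_0=\sqrt{\alpha\,\mE z_i(\g;\mbox{relu}_2)}-1$ in (\ref{eq:relud1a11}). Since $\alpha>0$, the inequality $\phi_0>0$ is equivalent to $\alpha>1/\mE z_i(\g;\mbox{relu}_2)$, so I would simply set $\hat{c}(2;\mbox{relu})\triangleq 1/\mE z_i(\g;\mbox{relu}_2)$, whereupon (\ref{eq:lemma4reluta30}) and (\ref{eq:lemma4reluta30a}) follow directly from that chain. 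All the remaining work is the explicit determination of $z_i$ and of its mean.

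The first concrete step is to solve the $d=2$ projection (\ref{eq:relud2a1}) exactly. The feasible set $\{\max(\q_1,0)\le\max(\q_2,0)\}$ is \emph{non-convex}: it is the union of the half-plane $\{\q_1\le 0\}$, on which the constraint is vacuous, and the wedge $\{0<\q_1\le\q_2\}$. I would therefore compute $z_i$ as the smaller of the two squared distances from $\g$ to these pieces. When $\g_1\le 0$ or $\g_2\ge\g_1$ the point $\g$ is already feasible, so $z_i=0$. Otherwise ($\g_1>0$, $\g_2<\g_1$) the half-plane projection is $(0,\g_2)$ with cost $\g_1^2$, while the projection onto the boundary line $\q_1=\q_2$ is $t=(\g_1+\g_2)/2$ with cost $(\g_1-\g_2)^2/2$ (valid when $t>0$). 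Comparing $\g_1^2$ with $(\g_1-\g_2)^2/2$ produces exactly the threshold $\g_2=(1-\sqrt{2})\g_1$, yielding the three-way formula (\ref{eq:relud2a1}).

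With $z_i$ in hand, I would compute $\mE z_i$ against the product standard-normal density, restricting the outer variable to $\g_1>0$ since $z_i$ vanishes for $\g_1\le 0$. For each fixed $\g_1>0$ the inner $\g_2$-integral splits at $(1-\sqrt{2})\g_1$ and $\g_1$: the tail $\g_2<(1-\sqrt{2})\g_1$ contributes $\g_1^2$ times a Gaussian tail probability, giving the $\erfc$ term $I_1(\g_1)$; the window $(1-\sqrt{2})\g_1\le\g_2\le\g_1$ contributes a second-moment integral of $(\g_1-\g_2)^2/2$, which after completing the square yields the $\erf$ and exponential terms of $I_2(\g_1)$ in (\ref{eq:relud1a14}). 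The expectation is then the outer integral $\int_0^\infty (I_1(\g_1)+I_2(\g_1))\,e^{-\g_1^2/2}/\sqrt{2\pi}\,d\g_1$ of (\ref{eq:relud1a13}), whose numerical value gives $\hat{c}(2;\mbox{relu})\approx 3.81$.

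The main obstacle is the projection step: because the feasible region is a union of two convex pieces rather than one convex set, one cannot merely write KKT conditions and must instead argue that the global minimizer lives on one of the two branches and compare their costs. Pinning down the branch boundary $(1-\sqrt{2})\g_1$, and in particular verifying that the wedge projection is active only when $\g_1+\g_2>0$ (otherwise the half-plane branch already dominates), is the delicate part; everything downstream is a routine, if tedious, one-dimensional Gaussian moment computation.
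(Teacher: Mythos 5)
Your proposal is correct and follows essentially the same route as the paper: reduce to $\hat{c}(2;\mbox{relu})=1/\mE z_i(\g;\mbox{relu}_2)$ via $\phi_0=\sqrt{\alpha\,\mE z_i}-1$, solve the non-convex projection by comparing the half-plane branch (cost $\g_1^2$) with the $\q_1=\q_2$ branch (cost $(\g_1-\g_2)^2/2$) to get the threshold $(1-\sqrt{2})\g_1$, and then evaluate the Gaussian integrals yielding $I_1$ and $I_2$. Your explicit handling of the union-of-two-convex-pieces geometry and of the $\g_1+\g_2\le 0$ subcase is exactly the verification the paper leaves implicit in its case analysis.
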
\vspace{-.17in}
\begin{proof}
Follows immediately from the above discussion.
\end{proof}
The above lemma basically states that when the sample complexity $m$ is such that $m>3.81n$ (with $n$ being the data vectors' ambient dimension) the network fails to memorize the data. Consequently, one has for the memory capacity of the $2$ hidden layer \emph{ReLU} activated neurons TCMs, $C(A([n,2,1];\mbox{relu}))\leq 3.81n$. As was the case when we discussed the quadratic activation, due to highly non-convex underlying problems, the strong random duality considerations from \cite{StojnicRegRndDlt10,StojnicUpper10,StojnicGorEx10} are inapplicable.

\subsubsubsection{General $d$}
\label{sec:gend}

For studying general $d$, we find it convenient to introduce vector $\g^{(1)}$, $\g^{(2)}$, $\g^{(2,ac)}$, and $\g^{(2,a)}$
 \begin{eqnarray}\label{eq:relugendeq1}
\g^{(1)} & \triangleq & \left\{ \g_i|\g_i>0,1\leq i\leq \frac{d}{2}\right\} \nonumber \\
\g^{(2)} & \triangleq & \mbox{sort}\lp \g_{\frac{d}{2}+1:d} \rp \nonumber \\
\g^{(2,a)} & \triangleq & \min(\g^{(2)},0),
\end{eqnarray}
where sorting is in the descending order. Basically, $\g^{(1)}$ is comprised of the positive components of $\g_{1:\frac{d}{2}}$ and $\g^{(2)}$ is $\g_{\frac{d}{2}+1:d}$ sorted in the descending order. Also, for the notational simplicity, let the lengths of $\g^{(1)}$ and $\g^{(2)}$ be $d_1$  and $d_2$, respectively and let $d_3$ be the number of the nonnegative elements of $\g^{(2)}$ (clearly, $d_2=\frac{d}{2}$). It is then not difficult to see that (\ref{eq:relu3a1}) can be rewritten as
 \begin{eqnarray}\label{eq:relugendeq1a1}
  z_i(\g;\mbox{relu})   =  \min_{\q^{(1)},\q^{(2)}} & &   \sum_{i=1}^{d_1}(\g^{(1)}_i-\q^{(1)}_i)^2+ \sum_{i=1}^{d_2}(\g^{(2,a)}_i-\q^{(2)}_i)^2 \nonumber \\
\mbox{subject to}   & &  \sum_{i=1}^{d_1}  \max(\q^{(1)}_i,0) \leq  \sum_{i=1}^{d_2} \max(\q^{(2)}_i,0).
\end{eqnarray}
Moreover, given the positivity of $\g^{(1)}$, it is relatively easy to see that
 \begin{eqnarray}\label{eq:relugendeq2}
  z_i(\g;\mbox{relu})   =  \min_{\q^{(1)}\geq 0,\q^{(2)}} & &   \sum_{i=1}^{d_1}(\g^{(1)}_i-\q^{(1)}_i)^2+ \sum_{i=1}^{d_2}(\g^{(2,a)}_i-\q^{(2)}_i)^2 \nonumber \\
\mbox{subject to}   & &  \sum_{i=1}^{d_1}  \q^{(1)}_i \leq  \sum_{i=1}^{d_2} \max(\q^{(2)}_i,0).
\end{eqnarray}
Following the methodology utilized for studying the quadratic activations, we find it convenient to study the following optimization problem for a $b\geq 0$ and for any $k\in\{d_3,d_3+1,\dots,d_2\}$
  \begin{eqnarray}\label{eq:relu3a2}
  z_i^{(2)}(\g^{(2)},b,k;\mbox{relu})   =  \min_{\q^{(2)}\geq 0} & &   \sum_{i=1}^k(\g^{(2,a)}_i-\q^{(2)}_i)^2 \nonumber \\
\mbox{subject to}   & &  \sum_{i=1}^k \q^{(2)}_i=b.
\end{eqnarray}
One can then trivially rewrite (\ref{eq:relu3a2}) as
 \begin{eqnarray}\label{eq:relu3a3}
  z_i^{(2)}(\g^{(2)},b,k;\mbox{relu})   =  \min_{\q^{(2)}\geq 0} & &   \sum_{i=1}^k \lp \g^{(2,a)}_i\rp^2-2\sum_{i=1}^k\g^{(2,a)}_i\q^{(2)}_i +\sum_{i=1}^k \lp\q^{(2)}_i\rp^2 \nonumber \\
\mbox{subject to}   & &  \sum_{i=1}^k \q^{(2)}_i=b.
\end{eqnarray}
After writing the Lagrangian and relying on the strong duality, we also have
 \begin{eqnarray}\label{eq:relu3a3a1}
  z_i^{(2)}(\g^{(2)},b,k;\mbox{relu})
  & =  & \min_{\q^{(2)}\geq 0} \max_{\nu}   \sum_{i=1}^k\lp\g^{(2,a)}_i\rp^2-2\sum_{i=1}^k\g^{(2,a)}_i\q^{(2)}_i +\sum_{i=1}^k\lp \q^{(2)}_i\rp^2 +2\nu \lp\sum_{i=1}^k \q^{(2)}_i-b\rp \nonumber \\
  & =  & \max_{\nu}  \min_{\q^{(2)}\geq 0}  \sum_{i=1}^k\lp \g^{(2,a)}_i\rp^2-2\sum_{i=1}^k\g^{(2,a)}_i\q^{(2)}_i +\sum_{i=1}^k\lp\q^{(2)}_i\rp^2 +2\nu \lp\sum_{i=1}^k \q^{(2)}_i-b\rp.
\end{eqnarray}
Solving over $\q^{(2)}$ gives
 \begin{eqnarray}\label{eq:relu3a3a2}
\q^{(2,opt)}=\max(\g^{(2,a)}-\nu,0),
\end{eqnarray}
and
 \begin{eqnarray}\label{eq:relu3a3a3}
  z_i^{(2)}(\g^{(2)},b,k;\mbox{relu})
  & =  & \max_{\nu} \lp  \sum_{i=1}^k\lp\g^{(2,a)}_i\rp^2-\sum_{i=1}^k\lp\q^{(2,opt)}_i\rp^2 -2\nu b\rp.
\end{eqnarray}
After setting
 \begin{eqnarray}\label{eq:relu3a3a4}
  \bar{z}_i^{(2)}(\g^{(2)},b,k;\mbox{relu})
   =   \begin{cases}
  \max_{\nu}  \lp \sum_{i=1}^k\lp\g^{(2,a)}_i\rp^2-\sum_{i=1}^k\lp\q^{(2,opt)}_i\rp^2 -2\nu b\rp, & \mbox{if } \min(\q^{(2,opt)})>0 \\
            \infty, & \mbox{otherwise},
         \end{cases}
\end{eqnarray}
it is not that difficult to see that (\ref{eq:relugendeq2}) can be rewritten as
 \begin{eqnarray}\label{eq:relugendeq3}
  z_i(\g;\mbox{relu})   =  \min_{b\geq 0,\q^{(1)}\geq 0} & &   \sum_{i=1}^{d_1}(\g^{(1)}_i-\q^{(1)}_i)^2+ \min_{k\in\{d_3,d_3+1,\dots,d_2\}} z_i^{(2)}(\g^{(2)},b,k;\mbox{relu})\nonumber \\
\mbox{subject to}   & &  \sum_{i=1}^{d_1}  \q^{(1)}_i \leq  b.
\end{eqnarray}
We can then write the Lagrangian and rely on the strong duality as above to obtain
 \begin{eqnarray}\label{eq:relugendeq4}
  z_i(\g;\mbox{relu})
  &  =  & \min_{b\geq 0, \q^{(1)}\geq 0} \max_{\nu_1\geq 0}   \sum_{i=1}^{d_1}(\g^{(1)}_i-\q^{(1)}_i)^2+ \min_{k\in\{d_3+1,d_3+2,\dots,d_2\}} \bar{z}_i^{(2)}(\g^{(2)},b,k;\mbox{relu}) +2\nu_1(\sum_{i=1}^{d_1}  \q^{(1)}_i - b) \nonumber \\
 &  =  & \min_{b\geq 0}\max_{\nu_1\geq 0} \min_{\q^{(1)}\geq 0}    \sum_{i=1}^{d_1}(\g^{(1)}_i-\q^{(1)}_i)^2+ \min_{k\in\{d_3+1,d_3+2,\dots,d_2\}} \bar{z}_i^{(2)}(\g^{(2)},b,k;\mbox{relu}) +2\nu_1(\sum_{i=1}^{d_1}  \q^{(1)}_i - b).\nonumber \\
\end{eqnarray}
Solving over $\q^{(1)}$ gives
 \begin{eqnarray}\label{eq:relugendeq5}
\q^{(1,opt)}=\max(\g^{(1)}-\nu_1,0),
\end{eqnarray}
and
 \begin{eqnarray}\label{eq:relugendeq6}
  z_i(\g;\mbox{relu})
  &  =  & \min_{b\geq 0}\max_{\nu_1\geq 0}    \sum_{i=1}^{d_1}\lp\g^{(1)}_i\rp^2- \sum_{i=1}^{d_1}\lp\q^{(1,opt)}_i\rp^2 -2\nu_1 b + \min_{k\in\{d_3,d_3+1,\dots,d_2\}} \bar{z}_i^{(2)}(\g^{(2)},b,k;\mbox{relu}).\nonumber \\
\end{eqnarray}
Connecting (\ref{eq:ta18a0}), (\ref{eq:ta20}), (\ref{eq:relu3}), (\ref{eq:relu3a1}), and (\ref{eq:relugendeq6}), we obtain
\begin{eqnarray}
\phi_0 & = &   \lim_{n\rightarrow \infty}\mE_{G} \frac{1}{\sqrt{n}}
 \min_{\phi(Q)=0} \|G-Q\|_F- 1 \nonumber \\
 & = &   \lim_{n\rightarrow \infty}\mE_{G} \frac{1}{\sqrt{n}}
 \sqrt{\sum_{i=1}^{m}  z_i(G_{i,1:d})}- 1 \nonumber \\
 & = &    \sqrt{\alpha \mE   z_i(\g;\mbox{relu})  }- 1.\label{eq:relugenda11}
\end{eqnarray}

 We summarize the above results in the following lemma.
\begin{lemma}(Memory capacity upper bound; \textbf{ReLU} activation; general $d$) Assume the setup of Theorem \ref{thm:thm1}. For rectified linear unit (ReLU) $\f^{(2)}(\x)=\max(\x,0)$, let $c(d;\mbox{relu})\triangleq c(d;\f^{(2)}(\x)=\max(\x,0)))$ be the $n$-scaled memory capacity from (\ref{eq:model4}). Let $\g$ be an $d$-dimensional vector comprised of iid standard normals and let $\g^{(1)}$, $\g^{(2)}$, and $\g^{(2,a)}$ be as in (\ref{eq:relugendeq1}). Also, let $d_3$ be the number of the nonnegative elements of $\g^{(2)}$ and let $d_2=\frac{d}{2}$. Additionally, let $\q^{(2,opt)}$, $\q^{(1,opt)}$, $\bar{z}_i^{(2)}(\g^{(2)},b,k;\mbox{relu})$, and $z_i(\g;\mbox{relu})$  be as in (\ref{eq:relu3a3a2}), (\ref{eq:relugendeq5}), (\ref{eq:relu3a3a4}), and (\ref{eq:relugendeq6}), respectively. One then has the following
\vspace{-.0in}
\vspace{-.0in}\begin{center}
\tcbset{beamer,lower separated=false, fonttitle=\bfseries,
coltext=black , interior style={top color=orange!10!yellow!30!white, bottom color=yellow!80!yellow!50!white}, title style={left color=orange!10!cyan!30!blue, right color=green!70!blue!20!black}}
 \begin{tcolorbox}[beamer,title=\textbf{($n$-scaled) memory capacity upper bound:},lower separated=false, fonttitle=\bfseries,width=.65\linewidth] 
\vspace{-.15in}
 \begin{eqnarray*}
\hspace{-.0in} \hat{c}(d;\mbox{relu})
=  \frac{1}{\mE   z_i(\g;\mbox{relu}) }. \end{eqnarray*}
\vspace{-.15in}
 \end{tcolorbox}
\end{center}\vspace{-.0in}
Then for any sample complexity $m$ such that $\alpha\triangleq \lim_{n\rightarrow\infty}\frac{m}{n}>\hat{c}(d;\mbox{relu})$
\begin{eqnarray}
 \lim_{n\rightarrow\infty}\mP_{X}(A([n,d,1];\mbox{relu}) \quad \mbox{fails to memorize data set} \quad (X,\1))\longrightarrow 1,\label{eq:lemma5reluta30}
\end{eqnarray}
and
\begin{eqnarray}
 \lim_{n\rightarrow\infty}\mP_{X}(c(d,\mbox{relu})<\hat{c}(d,\mbox{relu}))\longrightarrow 1.\label{eq:lemma5reluta30a}
\end{eqnarray}
 \label{lemma:lemma5}
\end{lemma}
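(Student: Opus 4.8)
The plan is to assemble the claim directly from Theorem~\ref{thm:thm1} together with the reduction of the random dual already carried out in (\ref{eq:ta18a0})--(\ref{eq:relugenda11}). Theorem~\ref{thm:thm1} supplies the implication $(\phi_0>0)\Rightarrow\lp\lim_{n\rightarrow\infty}\mP_{X}(A([n,d,1];\mbox{relu})\mbox{ fails to memorize }(X,\1))\rightarrow 1\rp$, so the entire task reduces to (i) writing $\phi_0$ as an explicit function of $\alpha$ through the per-row value $z_i(\g;\mbox{relu})$, and (ii) locating the sign change of that function. The concentration statement (\ref{eq:ta15}) lets me pass from $f_{rd}(G,H)$ to its mean, and the row decomposition (\ref{eq:ta20})--(\ref{eq:ta21}) turns $\min_{\phi(Q)=0}\|G-Q\|_F$ into $\sqrt{\sum_{i=1}^{m}z_i(G_{i,1:d})}$. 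A concentration argument on the $m$ independent identically distributed summands then yields $\frac{1}{\sqrt{n}}\sqrt{\sum_{i}z_i}\rightarrow\sqrt{\alpha\,\mE z_i(\g;\mbox{relu})}$, so that $\phi_0=\sqrt{\alpha\,\mE z_i(\g;\mbox{relu})}-1$ exactly as in (\ref{eq:relugenda11}).

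The substantive part is verifying the variational value of $z_i(\g;\mbox{relu})$, i.e. establishing (\ref{eq:relugendeq6}) from the defining problem (\ref{eq:relu3a1}). Here I would follow the displayed derivation: split the $d$ coordinates according to the $-\1/\1$ block structure of $\w$ in (\ref{eq:relu0}); observe that only the \emph{positive} entries of the first block can ever need to be pushed down, since a coordinate with $\g_i\leq 0$ already contributes nothing to $\max(\q_i,0)$ at its unconstrained optimum $\q_i=\g_i$; relabel those positive entries as $\g^{(1)}$ and sort the second block descendingly into $\g^{(2)}$ as in (\ref{eq:relugendeq1}). This justifies the passage to (\ref{eq:relugendeq1a1}) and then to (\ref{eq:relugendeq2}), where $\q^{(1)}$ may be taken nonnegative. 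For the inner (second-block) problem I would fix the total positive mass $b$ and the number $k$ of strictly positive optimal coordinates, solve the resulting convex quadratic with a single equality constraint by the Lagrangian step (\ref{eq:relu3a3a1})--(\ref{eq:relu3a3a3}), and encode feasibility of the active-set guess through the $\infty$ penalty in (\ref{eq:relu3a3a4}); minimizing over $k\in\{d_3,\dots,d_2\}$ selects the correct active set. The same convex Lagrangian mechanism applied to the outer $(b,\q^{(1)})$ problem produces (\ref{eq:relugendeq4})--(\ref{eq:relugendeq6}). Convexity of each subproblem (a quadratic subject to linear equality/inequality and sign constraints) guarantees strong deterministic duality, so every $\min$--$\max$ interchange above is legitimate.

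With $z_i(\g;\mbox{relu})$ so characterized, the threshold is immediate: $\phi_0=\sqrt{\alpha\,\mE z_i(\g;\mbox{relu})}-1>0$ holds precisely when $\alpha>1/\mE z_i(\g;\mbox{relu})=\hat{c}(d;\mbox{relu})$, which combined with Theorem~\ref{thm:thm1} gives (\ref{eq:lemma5reluta30}). The second conclusion (\ref{eq:lemma5reluta30a}) then follows because failure to memorize for \emph{every} $\alpha>\hat{c}(d;\mbox{relu})$ forces the $n$-scaled capacity $c(d;\mbox{relu})$ to lie below $\hat{c}(d;\mbox{relu})$ with probability tending to one. For $\hat{c}(d;\mbox{relu})$ to be well defined one also needs $0<\mE z_i(\g;\mbox{relu})<\infty$: finiteness is clear since $\q=\0$ is always feasible in (\ref{eq:relu3a1}) and yields $z_i(\g;\mbox{relu})\leq\|\g\|_2^2$, whence $\mE z_i(\g;\mbox{relu})\leq d$; strict positivity follows because $z_i(\g;\mbox{relu})>0$ on the positive-probability event that $\g$ is infeasible, i.e. $\sum_{i=1}^{d/2}\max(\g_i,0)>\sum_{i=d/2+1}^{d}\max(\g_i,0)$.

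The step I expect to be the main obstacle is the correct identification of the active set in the second-block subproblem: proving that the optimal $\q^{(2)}$ is obtained by thresholding the sorted, clipped vector $\g^{(2,a)}=\min(\g^{(2)},0)$ at a single dual level $\nu$, and that minimizing over the candidate cardinalities $k$ recovers the true optimizer. The clipping together with the constraint $\q^{(2)}\geq 0$ turns this into a non-smooth, combinatorial-looking selection, and the clean justification is exactly what the $\infty$-penalized reformulation (\ref{eq:relu3a3a4}) and the descending sort in (\ref{eq:relugendeq1}) are built to encapsulate; checking that this reformulation coincides with the original constrained minimum is the one place where a short but careful KKT/monotonicity argument is genuinely needed.
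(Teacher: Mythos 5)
Your proposal is correct and follows essentially the same route as the paper: the paper's proof of this lemma is just ``follows from the above discussion,'' and that discussion is precisely the chain you reconstruct --- the block split induced by $\w$ in (\ref{eq:relu0}), the relabeling/sorting into $\g^{(1)},\g^{(2)},\g^{(2,a)}$, the active-set Lagrangian treatment of (\ref{eq:relu3a2})--(\ref{eq:relugendeq6}), and the connection to $\phi_0=\sqrt{\alpha\,\mE z_i(\g;\mbox{relu})}-1$ via (\ref{eq:relugenda11}) and Theorem \ref{thm:thm1}. Your added check that $0<\mE z_i(\g;\mbox{relu})<\infty$ and your flagging of the active-set identification as the delicate step are sensible refinements the paper leaves implicit, but they do not change the argument.
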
\vspace{-.17in}
\begin{proof}
Follows immediately from the above discussion.
\end{proof}
Utilization of the above lemma relies on a solid amount of numerical work. Taking for the concreteness, say, $d=4$, we obtain $\hat{c}(4;\mbox{relu})\approx 3.11$. This basically means that when the sample complexity $m$ is such that $m>3.11n$ (with $n$ being the data vectors' ambient dimension) the network fails to memorize the data. Consequently, one has for the memory capacity of the $4$ hidden layer \emph{ReLU} activated neurons TCMs, $C(A([n,4,1];\mbox{relu}))\leq 3.11n$. One can continue for other even $d$, with the numerical calculations being more and more involved as $d$ increases. A bit easier (albeit not as precise and a bit jittery) alternative is to simulate higher values of $d$. The obtained results are shown for a wider range of $d$ in Figure \ref{fig:fig3}. The replica symmetry based prediction, $\hat{c}(\infty;\mbox{relu})\approx 2.93$, obtained in \cite{ZavPeh21} is shown for the completeness as well. Finally, we should add that (as was the case when we discussed the quadratic activation and the \emph{ReLU} one with $d=2$), due to highly non-convex underlying problems, the strong random duality considerations from \cite{StojnicRegRndDlt10,StojnicUpper10,StojnicGorEx10} are inapplicable.

\begin{figure}[h]
\centering
\centerline{\includegraphics[width=1\linewidth]{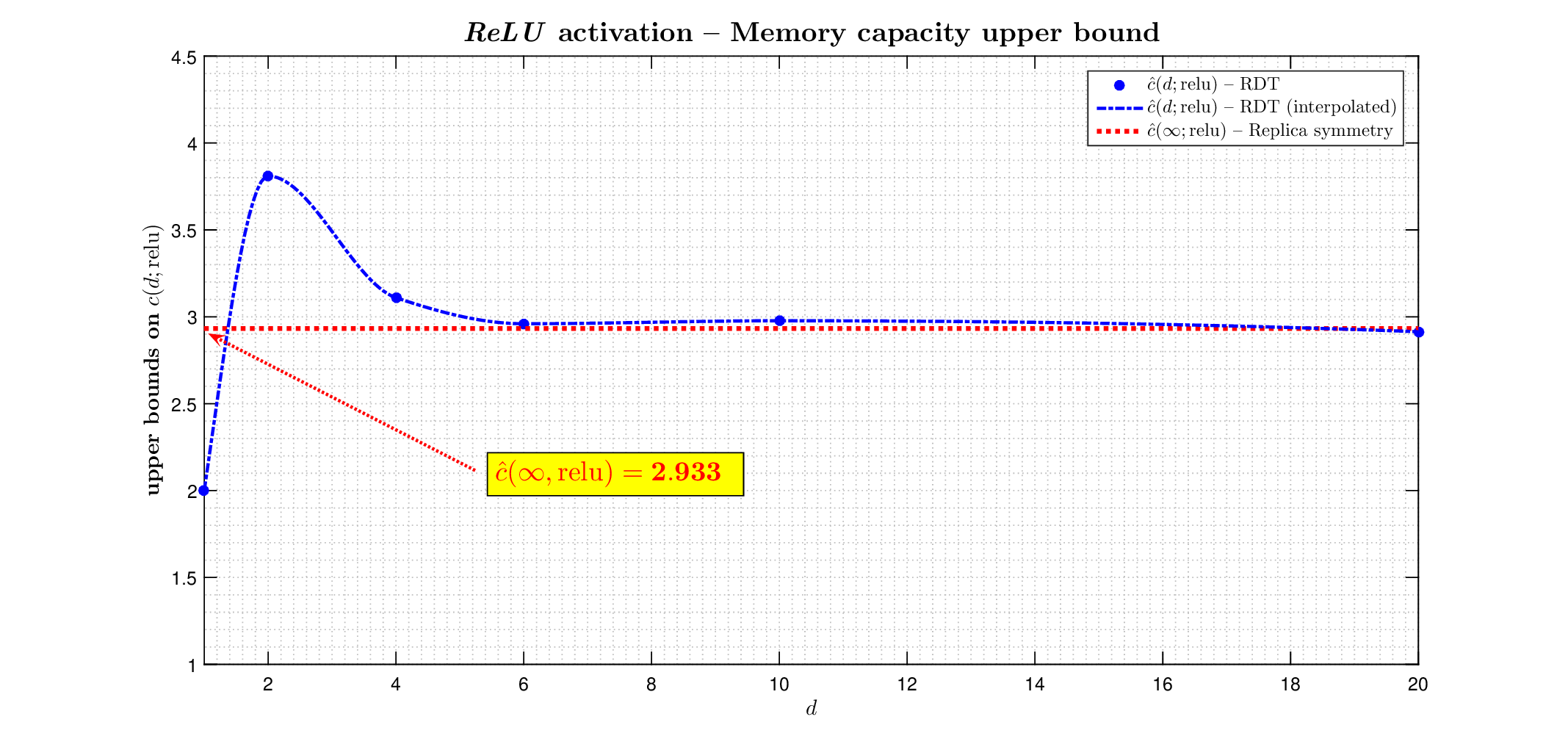}}
\caption{Memory capacity upper bound as a function of the number of neurons, $d$, in the hidden layer; 1-hidden layer TCM with \textbf{\emph{ReLU}} activations; \bl{\textbf{\emph{plain}  RDT}} estimate (\red{\textbf{Replica symmetry (RS)}} $d\rightarrow\infty$ estimate is included as well)}
\label{fig:fig3}
\end{figure}

\section{\emph{Partially lifted} Random Duality Theory (pl RDT)}
\label{sec:liftedrdt}

As the results from the previous sections showed, the RDT is rather useful tool when it comes to characterizing the memory capacity. In particular, the \emph{plain} RDT determines the memory capacity for the \emph{linear} activation and upper-bounds it for the \emph{quadratic} and \emph{ReLU} activations. Moreover, the scenarios where the application of the \emph{plain} RDT is of the upper-bounding nature can be handled through the recently developed \emph{fully lifted} (fl) RDT \cite{Stojnicsflgscompyx23,Stojnicnflgscompyx23,Stojnicflrdt23}. However, one needs to keep in mind, that the fl RDT relies on heavy numerical evaluations which would come on top of the already seen substantial numerical work from the previous sections. Opting for an analytically less accurate but computationally more convenient route seems as practically more beneficial. Recalling that a similar situation was observed when the \emph{sign} activations were considered  \cite{Stojnictcmspnncaprdt23,Stojnictcmspnncapliftedrdt23}, we find it reasonable to follow the path taken overthere and consider a \emph{partially lifted} (pl) RDT variant which relies on the following principles.
\vspace{-.0in}\begin{center}
 	\tcbset{beamer,lower separated=false, fonttitle=\bfseries, coltext=black ,
		interior style={top color=yellow!20!white, bottom color=yellow!60!white},title style={left color=black!80!purple!60!cyan, right color=yellow!80!white},
		width=(\linewidth-4pt)/4,before=,after=\hfill,fonttitle=\bfseries}
 \begin{tcolorbox}[beamer,title={\small Summary of the \emph{partially lifted} (pl) RDT's main principles} \cite{StojnicCSetam09,StojnicRegRndDlt10,StojnicLiftStrSec13,StojnicGardSphErr13,StojnicGardSphNeg13}, width=1\linewidth]
\vspace{-.15in}
{\small \begin{eqnarray*}
 \begin{array}{ll}
\hspace{-.19in} \mbox{1) \emph{Finding underlying optimization algebraic representation}}
 & \hspace{-.0in} \mbox{2) \emph{Determining the \textbf{partially lifted} random dual}} \\
\hspace{-.19in} \mbox{3) \emph{Handling the \textbf{partialy lifted} random dual}} &
 \hspace{-.0in} \mbox{4) \emph{Double-checking strong random duality.}}
 \end{array}
  \end{eqnarray*}}
\vspace{-.25in}
 \end{tcolorbox}
\end{center}\vspace{-.0in}

We below assume a solid level of familiarity with the discussions presented in \cite{Stojnictcmspnncaprdt23,Stojnictcmspnncapliftedrdt23}. To ensure the smoothness of the presentation, we parallel the presentation from  \cite{Stojnictcmspnncapliftedrdt23} as closely as possible and discuss separately each of the above four principles within the context of our interest here.

\vspace{.1in}
\noindent \underline{1) \textbf{\emph{Algebraic memorization characterization:}}}  This part of the pl RDT corresponds to the first part of the plain RDT and is already obtained in Lemma \ref{lemma:lemma1}. As mentioned earlier, Lemma \ref{lemma:lemma1} holds for any given data set $\left (\x^{(0,k)},1\right )_{k=1:m}$. On the other hand, to analyze (\ref{eq:ta10}) and (\ref{eq:ta11}), the pl RDT proceeds similarly to the plain RDT and imposes a statistics  on $X$.


\noindent \underline{2) \textbf{\emph{Determining the partially lifted random dual:}}} Keeping in mind the measure concentration from (\ref{eq:ta15})  (see, e.g. \cite{StojnicCSetam09,StojnicRegRndDlt10,StojnicICASSP10var,Stojnictcmspnncaprdt23,Stojnictcmspnncapliftedrdt23}), the following so-called \emph{partially lifted} random dual theorem is another key ingredient of the RDT machinery.
\begin{theorem}(Memorization characterization via \emph{partially lifted} random dual) Let $d$ be any even integer. Consider a TCM with $d$ neurons in the hidden layer and architecture $A([n,d,1];\f^{(2)})$, and let the elements of $X\in\mR^{m\times n}$, $G\in\mR^{m\times d}$, and $H\in\mR^{\delta\times d}$ be iid standard normals. Assuming $c_3>0$ and $\w\in\mR^{d\times 1}$, set
\vspace{-.0in}
\begin{eqnarray}
\phi(Q) & \triangleq & \|\1-\emph{\mbox{sign}}(\f^{(2)}(Q) \w)\|_2\nonumber \\
 f_{rd}^{(1)}(G) & \triangleq &  \max_{\phi(Q)=0}  -c_3  \|G-Q\|_F  \nonumber \\
 f_{rd}^{(2)}(H) & \triangleq &  \|H\|_F  \nonumber \\
 \bar{\phi}_0(\alpha;c_3)  & \triangleq & \lim_{n\rightarrow\infty} \frac{1}{\sqrt{n}}\lp  \frac{c_3}{2}
- \frac{1}{c_3}\log \lp \mE_{G}e^{\frac{c_3}{2}f_{rd}^{(1)}(G)}\rp
- \frac{1}{c_3}\log \lp \mE_{H}e^{\frac{c_3}{2}f_{rd}^{(2)}(H)}\rp   \rp.\label{eq:ta16}
\vspace{-.04in}\end{eqnarray}
One then has \vspace{-.02in}
\begin{eqnarray}
\hspace{-.1in}(\bar{\phi}_0(\alpha;c_3)  > 0)   &  \Longrightarrow  &  \lp \lim_{n\rightarrow\infty}\mP_{X}(f_{rp}>0)\longrightarrow 1 \rp  \nonumber \\
& \Longrightarrow & \lp \lim_{n\rightarrow\infty}\mP_{X}(A([n,d,1];\f^{(2)}) \quad \mbox{fails to memorize data set} \quad (X,\1))\longrightarrow 1\rp.\label{eq:ta17}
\end{eqnarray}
\label{thm:thm2}
\end{theorem}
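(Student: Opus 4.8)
The plan is to treat Theorem~\ref{thm:thm2} as the \emph{partially lifted} analogue of Theorem~\ref{thm:thm1}, paralleling the passage from the plain to the lifted random dual in \cite{Stojnictcmspnncapliftedrdt23}. The crucial structural observation is that the generic activation $\f^{(2)}$ enters the problem \emph{only} through the deterministic feasible set $\{Q\ :\ \phi(Q)=0\}$ in $f_{rp}$ from Lemma~\ref{lemma:lemma1}; it never interacts with the random matrix $X$. Hence every Gaussian comparison step is blind to the choice of activation, and the entire argument reduces to re-running the lifted machinery with the sign-induced feasible set replaced by the $\f^{(2)}$-induced one. Since Lemma~\ref{lemma:lemma1} already supplies the algebraic representation (principle~1), and the final implication $\lp\lim_{n\to\infty}\mP_X(f_{rp}>0)\to 1\rp\Longrightarrow(\text{failure})$ is inherited verbatim from Theorem~\ref{thm:thm1}, the only genuinely new content is the first implication $\bar{\phi}_0(\alpha;c_3)>0 \Longrightarrow \lim_{n\to\infty}\mP_X(f_{rp}>0)\to 1$.

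To establish that implication I would start from the min-max of Lemma~\ref{lemma:lemma1},
\[
\sqrt{n}\,f_{rp}(X) = \min_{\|\z^{(j)}\|_2=1,\,Q}\ \max_{\Lambda\in\mR^{m\times d}}\ \lp \phi(Q) + \sum_{j=1}^{d}(\Lambda_{:,j})^T X^{(j)}\z^{(j)} - \tr(\Lambda^T Q)\rp,
\]
so that the only randomness sits in the bilinear Gaussian form $\sum_{j}(\Lambda_{:,j})^T X^{(j)}\z^{(j)}$ (the blocks $X^{(1)},\dots,X^{(d)}$ being independent). Rather than bounding its expectation, which is what the plain RDT does and which yields Theorem~\ref{thm:thm1}, I would introduce the normalization $\|\Lambda\|_F=1$ exactly as in Theorem~\ref{thm:thm1} and apply the \emph{lifted} Gordon comparison of \cite{StojnicLiftStrSec13,StojnicGardSphErr13,StojnicGardSphNeg13} to the \emph{exponential moment} $\mE_X\exp\!\lp-\tfrac{c_3}{2}\sqrt{n}\,f_{rp}\rp$, which is the quantity that controls $\mP_X(f_{rp}\le 0)$ through a Chernoff bound (the minus sign matching the $-c_3$ appearing inside $f_{rd}^{(1)}$). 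The lifted comparison replaces the coupled form by the two decoupled Gaussian objects $G$ (acting on the $m$-dimensional $\Lambda$ side) and $H$ (acting on the $\delta$-dimensional $\z$ side), the residual $\tfrac{c_3}{2}$ being the self-energy correction produced by the Gaussian interpolation under the unit-Frobenius normalization. Carrying the exponential through the now-separated min-max factorizes the bound into $\mE_G e^{\frac{c_3}{2}f_{rd}^{(1)}(G)}$ and $\mE_H e^{\frac{c_3}{2}f_{rd}^{(2)}(H)}$, with $f_{rd}^{(1)},f_{rd}^{(2)}$ precisely as in \eqref{eq:ta16}.

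Taking logarithms, normalizing by $\tfrac{1}{c_3\sqrt{n}}$ and passing to the $n\to\infty$ limit then produces exactly the free-energy expression $\bar{\phi}_0(\alpha;c_3)$, and a standard Chernoff/Markov argument converts the exponential-moment control into the tail statement: when $\bar{\phi}_0(\alpha;c_3)>0$ the moment $\mE_X e^{-\frac{c_3}{2}\sqrt{n}\,f_{rp}}$ decays exponentially in $\sqrt{n}$, which forces $\mP_X(f_{rp}\le 0)\to 0$, i.e. $\mP_X(f_{rp}>0)\to 1$; chaining with Lemma~\ref{lemma:lemma1} gives the failure conclusion of \eqref{eq:ta17}. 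The concentration property \eqref{eq:ta15} legitimizes replacing the expectation-based quantities by their almost-sure limits throughout.

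I expect the main obstacle to lie in justifying the lifted comparison in the correct direction for this \emph{min-max} (not a pure max): the inner maximization over $\Lambda$ and the outer minimization over $(\z,Q)$ must be treated simultaneously while preserving the inequality sign that makes the decoupled free energy an upper bound on the failure-probability exponent. A secondary subtlety is that, unlike the convex linear/sign regime, the feasible set $\{Q:\phi(Q)=0\}$ induced by a generic (quadratic or ReLU) activation need not be convex, so one must check that the comparison requires only compactness and measurability of this set, not convexity, and that the limit defining $\bar{\phi}_0$ exists. Neither difficulty is activation-specific, which is exactly why the statement follows once the sign-case argument of \cite{Stojnictcmspnncapliftedrdt23} is recognized to be agnostic to the shape of the feasible region.
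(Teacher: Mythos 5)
Your proposal is correct and follows essentially the same route as the paper, which simply declares the theorem an immediate consequence of Theorem 2 of \cite{Stojnictcmspnncapliftedrdt23}: your key observation that the activation $\f^{(2)}$ enters only through the deterministic feasible set $\{Q:\phi(Q)=0\}$ and never touches the randomness in $X$ is precisely what makes that citation legitimate. Your reconstruction of the exponentiated (Chernoff-plus-lifted-comparison) argument, including the remark that the comparison needs only compactness and not convexity of the feasible set, is a faithful expansion of the machinery the paper is invoking.
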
\vspace{-.17in}
\begin{proof}
    Immediate consequence of Theorem 2 from \cite{Stojnictcmspnncapliftedrdt23}.
      \end{proof}

%
%
%
%
%
\noindent \underline{3) \textbf{\emph{Handling the lifted random dual:}}} After proceeding with a detailed careful analysis of the optimization over $Q$, one arrives at the following theorem.
\begin{theorem}(Memory capacity partially lifted (pl) RDT based upper bound; general $d$) Assume the setup of Theorem \ref{thm:thm2}. Let the network $n$-scaled capacity, $c(d;\f^{(2)})$, be as defined in (\ref{eq:model4}) and let $\g$ be a $d$-dimensional vector of iid standard normals. First one has
 \begin{eqnarray}\label{eq:thm3aan12}
  z_i(\g;\f^{(2)}) & = & \min_{\f^{(2)}(\q^T) \w\geq 0}  \|\g-\q\|_2^2 \nonumber \\
 I_Q & = &  \mE_{\g} e^{-\frac{c_3}{4\gamma} z_i\lp \g;\f^{(2)}\rp }  \nonumber \\
 I_{sph} & = & \gamma_{sph}-\frac{1}{2c_3}\log \lp 1-\frac{c_3}{2\gamma_{sph}}\rp, \quad  \gamma_{sph} =  \frac{c_3+\sqrt{c_3^2+4}}{4} \nonumber \\
\bar{\phi}_0(\alpha) &  = & \max_{c_3>0}\min_{\gamma} \lp \frac{c_3}{2} +\gamma -\frac{\alpha}{c_3}\log(I_Q)-I_{sph}\rp.
\end{eqnarray}
 Further, consider the following
\vspace{-.0in}
\vspace{-.0in}\begin{center}
\tcbset{beamer,lower separated=false, fonttitle=\bfseries,
coltext=black , interior style={top color=orange!10!yellow!30!white, bottom color=yellow!80!yellow!50!white}, title style={left color=orange!10!cyan!30!blue, right color=green!70!blue!20!black}}
 \begin{tcolorbox}[beamer,title=\textbf{($n$-scaled general $d$) memory capacity upper bound, $\bar{c}(d;\f^{(2)})$, that satisfies:},lower separated=false, fonttitle=\bfseries,width=.92\linewidth] 
\vspace{-.15in}
{\small\begin{eqnarray}\label{eq:thm3aan13}
\bar{\phi}_0(\bar{c}(d;\f^{(2)}))=0 \quad \Longleftrightarrow \quad  \max_{c_3>0}\min_{\gamma} \lp \frac{c_3}{2} +\gamma -\frac{\bar{c}(d;\f^{(2)})}{c_3}\log(I_Q)-I_{sph}\rp = 0. \end{eqnarray}}
\vspace{-.15in}
 \end{tcolorbox}
\end{center}\vspace{-.0in}
Then for any sample complexity $m$ such that $\alpha\triangleq \lim_{n\rightarrow\infty}\frac{m}{n}>\bar{c}(d;\f^{(2)})$
\begin{eqnarray}
 \lim_{n\rightarrow\infty}\mP_{X}(A([n,d,1];\f^{(2)}) \quad \mbox{fails to memorize data set} \quad (X,\1))\longrightarrow 1,\label{eq:thm3ta36}
\end{eqnarray}
and
\begin{eqnarray}
 \lim_{n\rightarrow\infty}\mP_{X}(c(d;\f^{(2)})<\bar{c}(d;\f^{(2)}))\longrightarrow 1.\label{eq:thm3ta37}
\end{eqnarray}
\label{thm:thm3}
\end{theorem}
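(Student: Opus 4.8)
The plan is to start from the \emph{partially lifted} random dual of Theorem~\ref{thm:thm2} and evaluate, in the $n\to\infty$ limit, the three constituents of $\bar{\phi}_0(\alpha;c_3)$ in (\ref{eq:ta16}); the capacity $\bar{c}(d;\f^{(2)})$ is then read off as the zero of the resulting function. The deterministic constant $\frac{c_3}{2}$ survives verbatim, so the work is concentrated in the $G$-term built from $f_{rd}^{(1)}$ and the spherical $H$-term built from $f_{rd}^{(2)}$. Everything except the $G$-term is activation-independent and can be imported directly from the sign-activation treatment of \cite{Stojnictcmspnncapliftedrdt23}; the only genuinely new ingredient is the per-row quantity $z_i(\g;\f^{(2)})$.

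For the $G$-term I would first note that $f_{rd}^{(1)}(G)=-c_3\min_{\phi(Q)=0}\|G-Q\|_F$ and that the constraint $\phi(Q)=0$ is separable across the rows of $Q$, so $\min_{\phi(Q)=0}\|G-Q\|_F^2=\sum_{i=1}^m z_i(G_{i,1:d})$ with the per-row subproblem $z_i$ exactly the activation-specific quantity in (\ref{eq:ta21defzi}) --- this is the sole entry point of $\f^{(2)}$. The main obstacle is that $\sqrt{\sum_i z_i}$ couples all $m$ rows, so $\mE_G e^{\frac{c_3}{2}f_{rd}^{(1)}}$ does not factorize. I would break the coupling with the elementary variational identity $\sqrt{x}=\min_{\gamma>0}\lp\frac{x}{2\gamma}+\frac{\gamma}{2}\rp$, which introduces the auxiliary $\gamma$ appearing in (\ref{eq:thm3aan12}); after the sign flip this becomes the outer $\min_\gamma$ of the final formula. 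With $\gamma$ frozen the exponent is linear in $\sum_i z_i$, the iid rows decouple, and the expectation collapses to $\lp\mE_\g e^{-\frac{c_3}{4\gamma}z_i(\g;\f^{(2)})}\rp^m=I_Q^m$; dividing by $\sqrt n$, using $m/n\to\alpha$, and applying $-\frac{1}{c_3}\log(\cdot)$ produces the $\gamma-\frac{\alpha}{c_3}\log(I_Q)$ contribution. Pulling the $\gamma$-optimum through the expectation and the limit is the delicate point, and is licensed by the concentration (\ref{eq:ta15}) precisely as in the sign case.

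For the spherical term, $H$ is an $n$-dimensional standard Gaussian (since $\delta d=n$), so $f_{rd}^{(2)}(H)=\|H\|_F$ is its Euclidean norm. Applying the same linearization, evaluating the resulting Gaussian moment $\mE e^{-t\|H\|_F^2}$ in closed form, and optimizing over the linearization variable yields the saddle point $\gamma_{sph}=\frac{c_3+\sqrt{c_3^2+4}}{4}$ and the stated $I_{sph}$; this computation is activation-independent and identical to \cite{Stojnictcmspnncapliftedrdt23}. Collecting $\frac{c_3}{2}$, the $\gamma-\frac{\alpha}{c_3}\log(I_Q)$ piece, and $-I_{sph}$, and then optimizing over the free lifting parameter $c_3>0$ (chosen so that the Theorem~\ref{thm:thm2} criterion is sharpest, hence $\max_{c_3>0}$), gives $\bar{\phi}_0(\alpha)$ as in (\ref{eq:thm3aan12}), with the $n$-scalings of $c_3$ and $\gamma$ fixed so that every term has an $O(1)$ limit.

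Finally, I would define $\bar{c}(d;\f^{(2)})$ by $\bar{\phi}_0(\bar{c}(d;\f^{(2)}))=0$, i.e.\ (\ref{eq:thm3aan13}), and observe that $\bar{\phi}_0(\alpha)$ is nondecreasing in $\alpha$ because $z_i\geq 0$ forces $I_Q\leq 1$, so each $-\frac{\alpha}{c_3}\log(I_Q)$ is nondecreasing in $\alpha$; a $\max$-$\min$ of such a family is again nondecreasing. Hence $\alpha>\bar{c}(d;\f^{(2)})$ forces $\bar{\phi}_0(\alpha)>0$, and Theorem~\ref{thm:thm2} then delivers the memorization-failure statement (\ref{eq:thm3ta36}) and the capacity upper bound (\ref{eq:thm3ta37}) at once. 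The fourth RDT principle cannot be used to close the gap here: the $Q$-optimization is nonconvex, strong random duality is unavailable, and so --- in contrast with the linear activation --- the result is genuinely only an upper bound.
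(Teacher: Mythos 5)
Your proposal follows essentially the same route as the paper's proof: split $\bar{\phi}_0$ into the activation-independent spherical $H$-term (imported in closed form, yielding $I_{sph}$ and $\gamma_{sph}$) and the $G$-term, which is handled by the squaring/linearization trick to decouple the $m$ iid rows so that the expectation factorizes into $I_Q^m$ and produces the $\gamma-\frac{\alpha}{c_3}\log(I_Q)$ contribution after the $c_3\rightarrow c_3\sqrt{n}$, $\gamma\rightarrow\gamma\sqrt{n}$ scalings. Your variational identity $\sqrt{x}=\min_{\gamma>0}\lp\frac{x}{2\gamma}+\frac{\gamma}{2}\rp$ is just a reparametrization of the paper's $\frac{x}{4\gamma}+\gamma$ form (so your $I_Q$ exponent should be adjusted accordingly to match (\ref{eq:thm3aan12})), and your explicit monotonicity-in-$\alpha$ argument via $I_Q\leq 1$ is a correct justification of a step the paper leaves implicit.
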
\vspace{-.0in}

\begin{proof}
The proof is split into two parts: \textbf{\emph{(i)}} Handling $\frac{1}{c_3\sqrt{n}}\log\lp\mE_{H} \mbox{exp}\lp  c_3 f_{rd}^{(2)}(H)\rp \rp$; and \textbf{\emph{(ii)}} Handling of $\frac{1}{c_3\sqrt{n}}\log\lp\mE_{G} \mbox{exp}\lp c_3 f_{rd}^{(1)}(G)\rp\rp$.

\underline{\textbf{\textbf{\emph{(i)}} Handling $\frac{1}{c_3\sqrt{n}}\log\lp\mE_{H} \mbox{exp}\lp  c_3 f_{rd}^{(2)}(H)\rp \rp$:}} One first observes
\begin{equation}\label{eq:supp8}
I_{sph} \triangleq \frac{1}{c_3\sqrt{n}}\log\lp\mE_{H} \mbox{exp}\lp  c_3 f_{rd}^{(2)}(H)\rp \rp =
 \frac{1}{c_3\sqrt{n}}\log\lp\mE_{H} \mbox{exp}\lp  c_3 \|H^T\|_F\rp \rp.
\end{equation}
After appropriate scaling, $c_3\rightarrow c_3\sqrt{n}$,  it was determined in \cite{StojnicMoreSophHopBnds10,Stojnictcmspnncapliftedrdt23} that
 \begin{equation}\label{eq:supp9}
  I_{sph}  =  \gamma_{sph}-\frac{1}{2c_3}\log \lp 1-\frac{c_3}{2\gamma_{sph}}\rp, \quad  \gamma_{sph} =  \frac{c_3+\sqrt{c_3^2+4}}{4}.
\end{equation}

\underline{\textbf{\textbf{\emph{(ii)}} Handling $\frac{1}{c_3\sqrt{n}}\log\lp\mE_{G} \mbox{exp}\lp  c_3 f_{rd}^{(1)}(G)\rp \rp$:}} Following closely \cite{Stojnictcmspnncapliftedrdt23}, we first observe
\begin{equation}\label{eq:supp10}
\log(I_{Q}') \triangleq \frac{1}{c_3\sqrt{n}} \log \lp \mE_{G} \mbox{exp}\lp  c_3 f_{rd}^{(2)}(G)\rp\rp =
 \frac{1}{c_3\sqrt{n}} \log \lp\mE_{G} \mbox{exp}\lp  -c_3 \min_{\phi(Q)=0}\|G-Q\|_F\rp \rp.
\end{equation}
Utilizing the squaring trick introduced on many occasions in   \cite{StojnicGardSphErr13,StojnicMoreSophHopBnds10}, we further find
\begin{equation}\label{eq:supp11}
\log(I_{Q}')=\max_{\gamma}
 \frac{1}{c_3\sqrt{n}}\log \lp \mE_{G} \mbox{exp}\lp  c_3\lp -\frac{1}{4\gamma} \min_{\phi(Q)=0}\|G-Q\|_F^2 -\gamma  \rp\rp\rp.
\end{equation}
Keeping in mind the appropriate scaling, $c_3\rightarrow c_3\sqrt{n}$ and $\gamma\rightarrow \gamma\sqrt{n}$, and recalling $\alpha=\lim_{n\rightarrow\infty}\frac{m}{n}$, one also has
\begin{equation}\label{eq:supp12}
-\log(I_{Q}')=\min_{\gamma} \lp\gamma -
 \frac{\alpha}{c_3}\log \lp \mE_{G} \mbox{exp}\lp  c_3\lp -\frac{1}{4\gamma} \min_{\phi_i(Q_{i,:})=1}\|G_{i,:}-Q_{i,:}\|_F^2 \rp\rp\rp\rp,
\end{equation}
where
\begin{equation}\label{eq:supp13}
  \phi_i(Q_{i,:})\triangleq \mbox{sign}(\mbox{sign}(Q_{i,:})\1).
\end{equation}
It is then not difficult to see that (\ref{eq:supp12}) is equivalent to the following
\begin{eqnarray}\label{eq:supp14}
-\log(I_{Q}') & = & \min_{\gamma} \lp\gamma -
 \frac{\alpha}{c_3}\log \lp \mE_{\g} \mbox{exp}\lp  \lp -\frac{c_3}{4\gamma} \min_{\f^{(2)}(\q^T)\w\leq 0}\|\g-\q\|_2^2 \rp\rp\rp\rp \nonumber \\
& = & \min_{\gamma} \lp\gamma -
 \frac{\alpha}{c_3}\log \lp \mE_{\g} \mbox{exp}\lp  -\frac{c_3}{4\gamma} z_i \lp \g;\f^{(2)} \rp \rp\rp\rp \nonumber \\
 & = & \min_{\gamma} \lp\gamma -
 \frac{\alpha}{c_3}\log \lp I_Q\rp \rp.
 \end{eqnarray}
A simple combination of (\ref{eq:ta16}), (\ref{eq:supp8})-(\ref{eq:supp10}), and (\ref{eq:supp14}) then completes the proof.
 \end{proof}


\noindent \underline{4) \textbf{\emph{Double checking the strong random duality:}}}  As discussed earlier and in \cite{Stojnictcmspnncaprdt23,Stojnictcmspnncapliftedrdt23}, the standard strong random duality double checking is not in place due to inapplicability of the typical, convexity based, considerations from \cite{StojnicRegRndDlt10,StojnicUpper10,StojnicGorEx10}.

\subsection{Specialization to particular $\f^{(2)}$ activations}
\label{sec:diffactplrdt}

Theorem \ref{thm:thm3} is generic and works for various $\f^{(2)}$ activations. To see how the whole machinery practically works, we here consider particular $\f^{(2)}$ activations. However, since the plain RDT completely solved the linear activation, we here focus only on the remaining two, the \emph{quadratic} and the \emph{ReLU}. In fact, we first focus most of our interest to the quadratic one as in that case the concrete capacity results can be obtained without an extensive numerical work. We then afterwards briefly comment on the ReLU case as well.

\subsubsection{Pl RDT capacity estimates for quadratic activations -- $\f^{(2)}(\x)=\x^2$}
\label{sec:quadplrdt}

The following theorem summarizes the pl RDT results for the \emph{quadratic} activations.

\begin{theorem}(Memory capacity partially lifted (pl) RDT based upper bound; \textbf{quadratic} activation) Assume the setup of Lemma \ref{lemma:lemma3} and Theorem \ref{thm:thm3} with $a_i^{(1)}$ and $a_i^{(2)}$ being independent chi distributed random variables with $\frac{d}{2}$ degrees of freedom. First one has
 \begin{eqnarray}\label{eq:thm4aan12}
 I_Q & = &  \int_{0}^{\infty}\int_{0}^{\infty} e^{-\frac{c_3}{4\gamma}\frac{\lp \max( a_i^{(1)}-  a_i^{(2)},0) \rp^2}{2}} f_{\chi}(a_i^{(2)}) f_{\chi}(a_i^{(1)})da_i^{(2)}da_i^{(1)}  \nonumber \\
 I_{sph} & = & \gamma_{sph}-\frac{1}{2c_3}\log \lp 1-\frac{c_3}{2\gamma_{sph}}\rp, \quad  \gamma_{sph} =  \frac{c_3+\sqrt{c_3^2+4}}{4} \nonumber \\
\bar{\phi}_0(\alpha) &  = & \max_{c_3>0}\min_{\gamma} \lp \frac{c_3}{2} +\gamma -\frac{\alpha}{c_3}\log(I_Q)-I_{sph}\rp.
\end{eqnarray}
 Further, consider the following
\vspace{-.0in}
\vspace{-.0in}\begin{center}
\tcbset{beamer,lower separated=false, fonttitle=\bfseries,
coltext=black , interior style={top color=orange!10!yellow!30!white, bottom color=yellow!80!yellow!50!white}, title style={left color=orange!10!cyan!30!blue, right color=green!70!blue!20!black}}
 \begin{tcolorbox}[beamer,title=\textbf{($n$-scaled general $d$) memory capacity upper bound, $\bar{c}(d;\mbox{quad})$, that satisfies:},lower separated=false, fonttitle=\bfseries,width=.92\linewidth] 
\vspace{-.15in}
\begin{eqnarray}\label{eq:thm4aan13}
\bar{\phi}_0(\bar{c}(d;\mbox{quad}))=0 \quad \Longleftrightarrow \quad  \max_{c_3>0}\min_{\gamma} \lp \frac{c_3}{2} +\gamma -\frac{\bar{c}(d;\mbox{quad})}{c_3}\log(I_Q)-I_{sph}\rp = 0. \end{eqnarray}
\vspace{-.15in}
 \end{tcolorbox}
\end{center}\vspace{-.0in}
Then for any sample complexity $m$ such that $\alpha\triangleq \lim_{n\rightarrow\infty}\frac{m}{n}>\bar{c}(d;\mbox{quad})$
\begin{eqnarray}
 \lim_{n\rightarrow\infty}\mP_{X}(A([n,d,1];\mbox{quad}) \quad \mbox{fails to memorize data set} \quad (X,\1))\longrightarrow 1,\label{eq:thm4ta36}
\end{eqnarray}
and
\begin{eqnarray}
 \lim_{n\rightarrow\infty}\mP_{X}(c(d;\mbox{quad})<\bar{c}(d;\mbox{quad}))\longrightarrow 1.\label{eq:thm4ta37}
\end{eqnarray}
\label{thm:thm4}
\end{theorem}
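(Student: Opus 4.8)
The plan is to obtain Theorem \ref{thm:thm4} as a direct specialization of the general-$d$ result Theorem \ref{thm:thm3} to the quadratic activation, so that the only quantity requiring genuine evaluation is $I_Q$; the expressions for $I_{sph}$, $\gamma_{sph}$, and the max-min form of $\bar{\phi}_0(\alpha)$ then carry over verbatim. First I would invoke Theorem \ref{thm:thm3} with $\f^{(2)}(\x)=\x^2$ and $\w$ fixed as in (\ref{eq:quad0}). This instantly supplies all of (\ref{eq:thm4aan12}) except that $I_Q$ remains in its generic guise $I_Q=\mE_{\g}e^{-\frac{c_3}{4\gamma}z_i(\g;\mbox{quad})}$, so the whole task collapses to making $z_i(\g;\mbox{quad})$ explicit and carrying out the Gaussian average.

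For that step I would reuse the computation already performed for the plain RDT in Lemma \ref{lemma:lemma3}. Equations (\ref{eq:quad3a11})-(\ref{eq:quad3a12}) give
\[
z_i(\g;\mbox{quad})=\frac{\lp\max\lp a_i^{(1)}-a_i^{(2)},0\rp\rp^2}{2},
\]
where $a_i^{(1)}$ and $a_i^{(2)}$ are the Euclidean norms of the first and second halves of $\g$. Because these two halves are independent blocks of $\frac{d}{2}$ iid standard normals, $a_i^{(1)}$ and $a_i^{(2)}$ are independent chi random variables with $\frac{d}{2}$ degrees of freedom, each distributed according to the density $f_{\chi}$ in (\ref{eq:quad12}).

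The key observation is that $z_i(\g;\mbox{quad})$ depends on $\g$ only through the pair $(a_i^{(1)},a_i^{(2)})$; hence the $d$-dimensional Gaussian expectation defining $I_Q$ collapses to a two-dimensional integral against the product density $f_{\chi}(a_i^{(1)})f_{\chi}(a_i^{(2)})$, namely
\[
I_Q=\int_{0}^{\infty}\int_{0}^{\infty} e^{-\frac{c_3}{4\gamma}\frac{\lp\max\lp a_i^{(1)}-a_i^{(2)},0\rp\rp^2}{2}} f_{\chi}(a_i^{(2)})f_{\chi}(a_i^{(1)})\,da_i^{(2)}\,da_i^{(1)},
\]
which is precisely the first line of (\ref{eq:thm4aan12}). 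Feeding this $I_Q$ into the capacity-defining identity (\ref{eq:thm3aan13}) reproduces (\ref{eq:thm4aan13}), while the two probabilistic conclusions (\ref{eq:thm4ta36})-(\ref{eq:thm4ta37}) are nothing but (\ref{eq:thm3ta36})-(\ref{eq:thm3ta37}) read off at $\f^{(2)}(\x)=\x^2$.

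I do not expect a substantive analytical obstacle: the essential content, the reduction to the two chi variables, was already secured in Lemma \ref{lemma:lemma3}, so Theorem \ref{thm:thm4} is in effect a bookkeeping specialization. The only points warranting care are consistency of the active-constraint orientation in $z_i$ (the $\geq 0$ convention of Theorem \ref{thm:thm3} versus the $\leq 0$ that surfaces in (\ref{eq:supp14})) and the correct normalization when passing from the Gaussian measure on $\g$ to the product chi measure. The genuinely laborious part, namely the numerical evaluation of the double integral and of the ensuing $\max_{c_3>0}\min_{\gamma}$ in (\ref{eq:thm4aan13}) that extracts $\bar{c}(d;\mbox{quad})$, I would defer to the accompanying numerics rather than attempt in closed form.
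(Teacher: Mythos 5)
Your proposal is correct and matches the paper's own proof, which likewise obtains Theorem \ref{thm:thm4} as an immediate specialization of Theorem \ref{thm:thm3} by recognizing $z_i(\g;\mbox{quad}) = \frac{\lp \max\lp a_i^{(1)}-  a_i^{(2)},0\rp\rp^2}{2}$ from the plain RDT computation in Lemma \ref{lemma:lemma3} and reducing the Gaussian average in $I_Q$ to the two-dimensional integral against the product chi density. No gaps.
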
\vspace{-.17in}

\begin{proof}
  Follows immediately from Theorem \ref{thm:thm3} after recognizing that
  $z_i(\g;\mbox{quad}) = \frac{\lp \max\lp a_i^{(1)}-  a_i^{(2)},0\rp\rp^2}{2}$.
\end{proof}

The results obtained based on the above theorem for a wider range of $d$ are shown in Figure \ref{fig:fig4}. For the completeness and a quick comparison, we include the results obtained earlier based on the plain RDT. The benefit of the partially lifted RDT is fairly strong throughout the entire range of the considered $d$'s. We also add the partial 1rsb $d\rightarrow\infty$ estimate obtained based on the statistical physics replica methods in \cite{ZavPeh21}. As can be seen from the figure, the convergence with $d$ is rather fast and already for fairly small $d$ values (of the order of a couple of tens) the limiting, $d\rightarrow\infty$, bound is narrowly approached. As was the case for the plain RDT, we here observe that the bounding capacity estimates are decreasing as $d$ increases with the largest value obtained again for $d=2$. Given its clear importance, we, for the concreteness, take precisely $d=2$ and find $\bar{c}(d;\mbox{quad})=4.065$, which then implies that the memory capacity of the $2$ hidden layer \emph{quadratically} activated neurons TCMs, $C(A([n,2,1];\mbox{quad}))\leq 4.065n$. Moreover, one observes a substantial drop from the plain RDT bound of $5.4978$ established in earlier sections.

\begin{figure}[h]
\centering
\centerline{\includegraphics[width=1\linewidth]{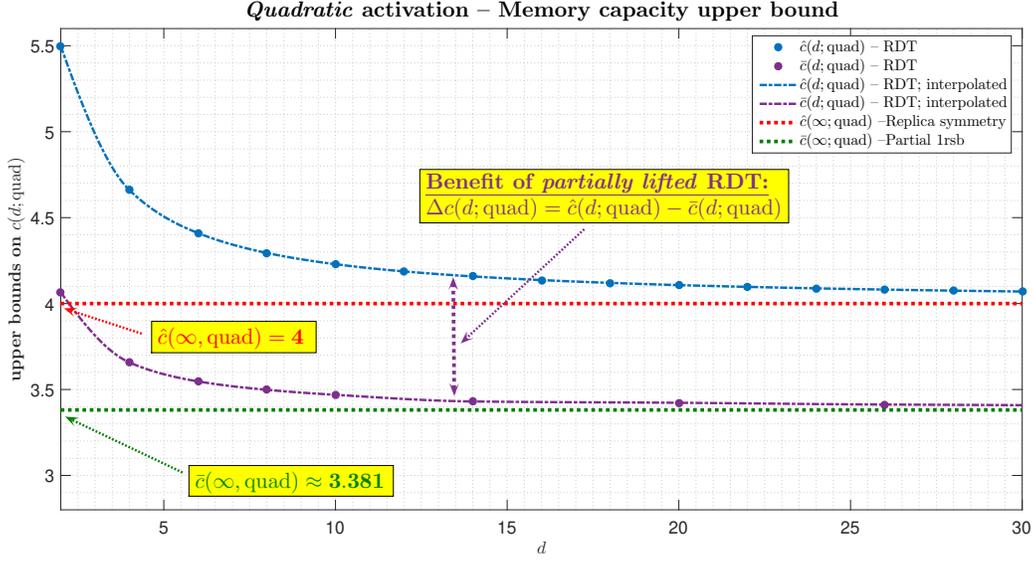}}
\caption{Memory capacity upper bound as a function of the number of neurons, $d$, in the hidden layer; 1-hidden layer TCM with \textbf{\emph{quadratic}} activations; \bl{\textbf{\emph{plain}  RDT}} versus \prp{\textbf{\emph{partially lifted} RDT}} (\red{\textbf{Replica symmetry (RS)}} and \dgr{\textbf{Partial 1rsb}} $d\rightarrow\infty$ estimates are included as well)}
\label{fig:fig4}
\end{figure}

\subsubsection{Pl RDT capacity estimates for ReLU activations -- $\f^{(2)}(\x)=\max(\x,0)$}
\label{sec:reluplrdt}

The following theorem summarizes the pl RDT results for the \emph{ReLU} activations.

\begin{theorem}(Memory capacity partially lifted (pl) RDT based upper bound; \textbf{ReLU} activation) Assume the setup of Lemma \ref{lemma:lemma3} and Theorem \ref{thm:thm3}  with $z_i(\g;\mbox{relu})$ as in (\ref{eq:relugendeq6}). First one has
 \begin{eqnarray}\label{eq:thm5aan12}
 I_Q & = &  \int_{-\infty}^{\infty} e^{-\frac{c_3}{4\gamma} z_i(\g;\mbox{relu}) }d\g  \nonumber \\
 I_{sph} & = & \gamma_{sph}-\frac{1}{2c_3}\log \lp 1-\frac{c_3}{2\gamma_{sph}}\rp, \quad  \gamma_{sph} =  \frac{c_3+\sqrt{c_3^2+4}}{4} \nonumber \\
\bar{\phi}_0(\alpha) &  = & \max_{c_3>0}\min_{\gamma} \lp \frac{c_3}{2} +\gamma -\frac{\alpha}{c_3}\log(I_Q)-I_{sph}\rp.
\end{eqnarray}
 Further, consider the following
\vspace{-.0in}
\vspace{-.0in}\begin{center}
\tcbset{beamer,lower separated=false, fonttitle=\bfseries,
coltext=black , interior style={top color=orange!10!yellow!30!white, bottom color=yellow!80!yellow!50!white}, title style={left color=orange!10!cyan!30!blue, right color=green!70!blue!20!black}}
 \begin{tcolorbox}[beamer,title=\textbf{($n$-scaled general $d$) memory capacity upper bound, $\bar{c}(d;\mbox{relu})$, that satisfies:},lower separated=false, fonttitle=\bfseries,width=.92\linewidth] 
\vspace{-.15in}
\begin{eqnarray}\label{eq:thm5aan13}
\bar{\phi}_0(\bar{c}(d;\mbox{relu}))=0 \quad \Longleftrightarrow \quad  \max_{c_3>0}\min_{\gamma} \lp \frac{c_3}{2} +\gamma -\frac{\bar{c}(d;\mbox{relu})}{c_3}\log(I_Q)-I_{sph}\rp = 0. \end{eqnarray}
\vspace{-.15in}
 \end{tcolorbox}
\end{center}\vspace{-.0in}
Then for any sample complexity $m$ such that $\alpha\triangleq \lim_{n\rightarrow\infty}\frac{m}{n}>\bar{c}(d;\mbox{relu})$
\begin{eqnarray}
 \lim_{n\rightarrow\infty}\mP_{X}(A([n,d,1];\mbox{relu}) \quad \mbox{fails to memorize data set} \quad (X,\1))\longrightarrow 1,\label{eq:thm5ta36}
\end{eqnarray}
and
\begin{eqnarray}
 \lim_{n\rightarrow\infty}\mP_{X}(c(d,\mbox{relu})<\bar{c}(d,\mbox{relu}))\longrightarrow 1.\label{eq:thm5ta37}
\end{eqnarray}
\label{thm:thm5}
\end{theorem}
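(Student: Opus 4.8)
The plan is to recognize that Theorem~\ref{thm:thm3} is entirely activation-agnostic: the activation $\f^{(2)}$ enters the final characterization only through the scalar function $z_i(\g;\f^{(2)})=\min_{\f^{(2)}(\q^T)\w\geq 0}\|\g-\q\|_2^2$, whereas the spherical term $I_{sph}$ (hence $\gamma_{sph}$) and the outer $\max_{c_3>0}\min_{\gamma}$ structure of $\bar{\phi}_0(\alpha)$ are common to every activation. Consequently, exactly as in the proof of Theorem~\ref{thm:thm4} for the quadratic case, the only real task is to substitute the ReLU-specific $z_i(\g;\mbox{relu})$ into the generic $I_Q=\mE_{\g}e^{-\frac{c_3}{4\gamma}z_i(\g;\f^{(2)})}$ and then carry the conclusions of Theorem~\ref{thm:thm3} over verbatim.

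First I would invoke the ReLU computation already performed in the plain-RDT ``General $d$'' subsection. There the constrained minimization defining $z_i(\g;\mbox{relu})$ was split into the two blocks dictated by the $-1$ and $+1$ entries of $\w$, reduced via the auxiliary vectors $\g^{(1)},\g^{(2)},\g^{(2,a)}$, and solved by writing Lagrangians and invoking strong (deterministic) duality on each block; this produced the inner quantities $\bar{z}_i^{(2)}(\g^{(2)},b,k;\mbox{relu})$ of (\ref{eq:relu3a3a4}) and the final expression (\ref{eq:relugendeq6}) for $z_i(\g;\mbox{relu})$. Since this is precisely the $z_i(\g;\f^{(2)})$ appearing in Theorem~\ref{thm:thm3}, plugging it in yields the $I_Q=\int_{-\infty}^{\infty}e^{-\frac{c_3}{4\gamma}z_i(\g;\mbox{relu})}d\g$ of (\ref{eq:thm5aan12}) (the integral understood as the $d$-dimensional standard Gaussian expectation over $\g$), while $I_{sph}$ and the saddle-point expression for $\bar{\phi}_0(\alpha)$ remain unchanged.

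Next I would transfer the implications. The defining equation $\bar{\phi}_0(\bar{c}(d;\mbox{relu}))=0$ in (\ref{eq:thm5aan13}) is the literal specialization of (\ref{eq:thm3aan13}), and the failure-to-memorize statement (\ref{eq:thm5ta36}) together with the strict upper bound (\ref{eq:thm5ta37}) follow immediately from (\ref{eq:thm3ta36})--(\ref{eq:thm3ta37}) once $\f^{(2)}=\mbox{relu}$ is fixed. Exactly as in the quadratic and $d=2$ ReLU cases, the strong random duality double-check of step~4 is unavailable, because the minimization defining $z_i(\g;\mbox{relu})$ is non-convex; hence the output is asserted only as a (strict) upper bound rather than as an exact capacity.

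The hard part is not the proof itself, which is a one-line specialization, but the structure of $z_i(\g;\mbox{relu})$. Unlike the quadratic activation, where $z_i(\g;\mbox{quad})$ collapses to a function of two independent chi variables and $I_Q$ thereby reduces to a two-dimensional integral, the ReLU $z_i(\g;\mbox{relu})$ genuinely depends on the full sorted $d$-dimensional Gaussian vector through the block decomposition and the inner index $k$. Thus $I_Q$ stays a genuine $d$-dimensional integral whose evaluation, and the subsequent resolution of the $\max_{c_3>0}\min_{\gamma}$ saddle point, must be carried out numerically; one must take care in the sorting step and in the admissible ranges of $b$, $k$, and $\nu_1$ so that the substituted $z_i(\g;\mbox{relu})$ is the true minimizer of the original constrained problem. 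A useful consistency check during this numerical work is that the resulting $\bar{c}(d;\mbox{relu})$ should not exceed the plain-RDT bound $\hat{c}(d;\mbox{relu})$ of Lemma~\ref{lemma:lemma5}.
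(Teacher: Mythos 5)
Your proposal matches the paper's proof exactly: the paper's argument is the same one-line specialization of Theorem \ref{thm:thm3}, obtained by recognizing that $z_i(\g;\mbox{relu})$ from (\ref{eq:relugendeq6}) is precisely $z_i(\g;\f^{(2)})$ for $\f^{(2)}(\x)=\max(\x,0)$. Your additional remarks on the $d$-dimensional nature of $I_Q$ and the attendant numerical burden are consistent with the paper's own discussion following the theorem.
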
\vspace{-.17in}

\begin{proof}
  Follows immediately from Theorem \ref{thm:thm3} after recognizing that
  $z_i(\g;\mbox{relu}) $ from (\ref{eq:relugendeq6}) is precisely $z_i(\g;\f^{(2)}=\max(\x,0))$.
\end{proof}

The numerical evaluations are now substantially more involved even for small values of $d$. Moreover, the simulations are rather extensive for larger values and the indication from the plain RDT suggest that $d=2$ is particularly relevant. As indicated in Table \ref{tab:tab1}, we applied the above machinery for $d=2$ and obtained the bound $\approx 3.81$, which means that in this particular case the partial RDT makes no improvement over the plain RDT. The table is completed by taking the plain RDT value for $d=4$ as well, since the underlying partial RDT numerical work is already rather heavy.

\section{Conclusion}
\label{sec:conc}

In this paper we studied the treelike committee machines (TCM) neural networks  and their memory capabilities. Differently form the common practice, we here instead of typical \emph{sign} perceptron hidden layer neuronal activations consider a generic set of activations. Utilizing a powerful mathematical concept called Random Duality Theory (RDT), \cite{Stojnictcmspnncaprdt23} established a generic statistical framework for the 1-hidden layer TCMs analysis that can study on a very precise level the scaled capacities for any given number of the neurons in the hidden layer, $d$. Among other things, such a machinery effectively enabled avoiding the qualitative/descriptive scaling types of estimates typically prevalent in the capacity analysis literature. Moreover, studying the \emph{sign} perceptron activations, it also made a very strong progress towards obtaining, in a mathematically rigorous way, their \emph{exact} $n$-scaled capacities. For small values of $d$, it also made a very first rigorous progress in over 30 years over the previously best known bounds of \cite{MitchDurb89}. Since the results of \cite{Stojnictcmspnncaprdt23} are, in general, of the upper-bounding type, \cite{Stojnictcmspnncapliftedrdt23} proceeded further by considering the so-called partially lifted (pl) RDT variant and significantly lowered the estimates from  \cite{Stojnictcmspnncaprdt23}. Such a lowering further resulted in ensuring a universal (over the entire range of $d$) improvement over the previously best known results of \cite{MitchDurb89}.

We here adopt the same strategy and utilize both the plain RDT and the partial RDT to characterize the 1-hidden layer TCM capacities with neuronal activations substantially different from the classical \emph{sign} one. We first establish a universal framework for studying generic activations and then consider three particular activations types that have attracted a strong interest in recent NN literature: (i) \emph{linear}; (ii) \emph{quadratic}; and (iii) \emph{ReLU}. For the linear activation we show that the plain RDT \emph{exactly} characterizes the capacity. Moreover, we show that, no matter how wide the hidden layer is, the capacity remains equal to the capacity of the single spherical \emph{sign} perceptron. For the quadratic and ReLU activations we obtain that the plain RDT predictions are decreasing functions of $d$ that converge to a constant value. The maximum bounding value is in both cases obtained for the smallest possible $d=2$. Moreover, for the pl RDT and quadratic activation, we obtain a strong improvement over the plain RDT through the entire range of the considered $d$'s. At the same time, the bounding capacity maintains the decreasing in $d$ property with the maximal value again being achieved for (the minimal possible) $d=2$. For the ReLU, we obtained that the pl RDT offers no improvement over the plain RDT for $d=2$ which means that the same, decreasing in $d$,  trend applies to these activations as well. Moreover, we uncover that another of the trends observed in  \cite{Stojnictcmspnncaprdt23} manifests itself here as well. Namely, the bounds obtained in \cite{Stojnictcmspnncaprdt23} precisely matched the corresponding statistical physics replica symmetry based predictions obtain in \cite{EKTVZ92,BHS92}. Here, we also observe  that the \emph{linear} activation predictions precisely match the ones obtained through the replica considerations in \cite{ZavPeh21,BalMalZech19}. Moreover, the $d\rightarrow\infty$ converging values of our both plain RDT and pl RDT closely approach the corresponding ones obtained in \cite{ZavPeh21}.

 Various extensions are possible as well. It is rather clear that the first next one is to conduct the analysis with the \emph{fully lifted} (fl) RDT (see, e.g., \cite{Stojnicflrdt23}). Also, we here consider only three well known activation functions. Many others are of interest as well, e.g., sigmoid, erf, tanh and so on. More complex multi-layered network architectures including both TCM and  FCM or PM based ones are of interest as well. All of these extensions, we will discuss in separate papers.

\begin{singlespace}
\bibliographystyle{plain}
\bibliography{nflgscompyxRefs}
\end{singlespace}

\end{document}